\documentclass[sigconf, authorversion]{acmart}

\AtBeginDocument{%
  }

\setcopyright{acmlicensed}
\copyrightyear{2018}
\acmYear{2018}
\acmDOI{XXXXXXX.XXXXXXX}
\acmConference[Conference acronym 'XX]{Make sure to enter the correct
  conference title from your rights confirmation email}{June 03--05,
  2018}{Woodstock, NY}
\acmISBN{978-1-4503-XXXX-X/2018/06}





\usepackage{hyperref}

\usepackage{algorithm}
\usepackage{enumitem}
\newcommand{\solved}[1]{}
\usepackage{algpseudocode}
\usepackage{subfig}
\newcommand{\model}{\textsc{KRAFT}}
\newcommand{\GG}{\mathcal{G}}
\newcommand{\F}{\mathcal{F}}
\newcommand{\E}{\mathcal{E}}
\newcommand{\R}{\mathcal{R}}
\newcommand{\T}{\mathcal{T}}
\newcommand{\bb}[0]{\mathbf{b}}
\newcommand{\W}[0]{\mathbf{W}}
\newcommand{\Dr}[0]{\mathrm{Dr}}
\newcommand{\mlpmixer}{\textsc{MLP-Mixer}}

\newcommand{\LL}[1]{\textcolor{black}{#1}}

\usepackage{tikz}
\usepackage{soul}
\newcommand*\circled[1]{\tikz[baseline=(char.base)]{
            \node[shape=circle,draw,inner sep=0.3pt] (char) {#1};}}
\usepackage{tcolorbox}

\newtheorem{problem}{Problem}

\newtheorem{dfn}{Definition}

\newtheorem{lemma}{Lemma}

\newtheorem{example}{Example}

\newtheorem{prop}{Proposition}

\newcommand{\eat}[1]{}
\newcommand{\eatt}[1]{}
\newcommand{\eatfull}[1]{} 
\newcommand{\eatimport}[1]{}
\newcommand{\eatfinal}[1]{}

\copyrightyear{2025}
\acmYear{2025}
\setcopyright{cc}
\setcctype{by}
\acmConference[CIKM '25]{Proceedings of the 34th ACM International Conference on Information and Knowledge Management}{November 10--14, 2025}{Seoul, Republic of Korea}
\acmBooktitle{Proceedings of the 34th ACM International Conference on Information and Knowledge Management (CIKM '25), November 10--14, 2025, Seoul, Republic of Korea}\acmDOI{10.1145/3746252.3761291}
\acmISBN{979-8-4007-2040-6/2025/11}

\settopmatter{printacmref=true}
\settopmatter{printfolios=true}

\begin{document}

\title{KRAFT: A Knowledge Graph-Based Framework for Automated Map Conflation}

\author{Farnoosh Hashemi}
\email{sh2574@cornell.edu}
\orcid{0000-0002-4778-6608}
\affiliation{%
  \institution{Cornell University}
  \city{Ithaca}
  \country{USA}
}

\author{Laks V.S. Lakshmanan}
\email{laks@cs.ubc.ca}
\affiliation{%
  \institution{The University of British Columbia}
  \city{Vancouver}
  \country{Canada}
}

\renewcommand{\shortauthors}{Hashemi and Lakshmanan}

\begin{abstract}

Digital maps play a crucial role in various applications such as navigation, fleet management, and ride-sharing, necessitating their accuracy and \LL{currency, which require} timely updates. While the majority of geospatial databases (GDBs) provide high-quality information, their data is (i) limited to specific regions and/or (ii) missing some entities, even in their covered areas. Map conflation is the process of augmentation of a GDB using another GDB to conflate missing spatial features. Existing map conflation methods suffer from two main limitations: (1) They are designed for the conflation of linear objects (e.g., road networks) and cannot simply be extended to non-linear objects, thus missing information about most entities in the map. (2) They are heuristic algorithmic approaches that are based on pre-defined rules, unable to learn entities matching in a data-driven manner. To address these limitations, we design \model{}, a learning based approach consisting of three parts: (1) \textit{Knowledge Graph Construction} -- where each GDB is represented by a knowledge graph, (2) \textit{Map Matching} -- where we use a knowledge graph alignment method as well as a geospatial feature encoder to match entities in obtained knowledge graphs, and  (3) \textit{Map Merging} -- where we merge matched entities in the previous modules in a consistent manner, using a mixed integer linear programming formulation that fully merges the GDBs without adding any inconsistencies. Our experimental evaluation shows that not only does \model{} achieve outstanding performance compared to state-of-the-art and baseline methods in map conflation tasks, but each of its modules (e.g., Map Matching and Map Merging) also separately outperforms traditional matching and merging methods.

\end{abstract}

\eat{
\begin{CCSXML}
<ccs2012>
 <concept>
  <concept_id>00000000.0000000.0000000</concept_id>
  <concept_desc>Do Not Use This Code, Generate the Correct Terms for Your Paper</concept_desc>
  <concept_significance>500</concept_significance>
 </concept>
 <concept>
  <concept_id>00000000.00000000.00000000</concept_id>
  <concept_desc>Do Not Use This Code, Generate the Correct Terms for Your Paper</concept_desc>
  <concept_significance>300</concept_significance>
 </concept>
 <concept>
  <concept_id>00000000.00000000.00000000</concept_id>
  <concept_desc>Do Not Use This Code, Generate the Correct Terms for Your Paper</concept_desc>
  <concept_significance>100</concept_significance>
 </concept>
 <concept>
  <concept_id>00000000.00000000.00000000</concept_id>
  <concept_desc>Do Not Use This Code, Generate the Correct Terms for Your Paper</concept_desc>
  <concept_significance>100</concept_significance>
 </concept>
</ccs2012>
\end{CCSXML}

\ccsdesc[500]{Do Not Use This Code~Generate the Correct Terms for Your Paper}
\ccsdesc[300]{Do Not Use This Code~Generate the Correct Terms for Your Paper}
\ccsdesc{Do Not Use This Code~Generate the Correct Terms for Your Paper}
\ccsdesc[100]{Do Not Use This Code~Generate the Correct Terms for Your Paper}}

\keywords{Map Conflation, Digital Maps, Knowledge Graphs}
\maketitle

\section{Introduction}\label{ch:Introduction}
Digital maps have become an indispensable part of our daily lives, revolutionizing the way we navigate, interact, search places of interest, and understand the world around us. On the other hand, in industrial uses, digital maps have an important role in navigation, transportation, route planning, fleet management, etc. Geospatial databases (GDBs) represent objects in a geometric space, such as vector data and raster data, and offer optimal manageability for analyzing them. The above applications require GDBs  with high-quality, wide coverage, and minimal missing spatial features while optimizing the cost and time for individual or industrial users. To address this need, several proprietary geospatial datasets as well as open-source GDBs have been developed. However, existing  GDBs are either \circled{1} limited to specific regions, \circled{2} missing some geospatial features/entities even in their covered areas, and/or \circled{3}  expensive to maintain and are not openly available.

While open-source GDBs do not suffer from the third drawback and allow ad hoc manual effort of adding missing features, the process is tedious, time-consuming, expensive, and is unlikely to scale to global coverage, resulting in having relatively poor coverage for many parts of the world \cite{thesis, gorisha,coverage1,coverage2}. \LL{To address this, \textit{map conflation} is defined as the task of merging two overlapping GDBs, combining accurate information from either database while minimizing the error}. Map conflation consists of two steps: \circled{1} \textit{Map Matching}: Given two GDBs $D$ and $D'$, pair all features $(E, E')$, where $E \in D$ and $E' \in D'$, such that $E$ and $E'$ represent the same physical entity on Earth in $D$ and $D'$, respectively. \circled{2} \textit{Map Merging}: \LL{Given a matched  pair of GDBs $D, D'$, we aim to augment the matched entities in $D$ with the unmatched entities from $D'$, in order to obtain a consistent map (i.e., GDB).} 
Geospatial databases are typically categorized as raster (image-based) or vector (geometry-based). Conflation, the process of merging two geospatial databases, can occur between vector-to-vector, vector-to-raster, or raster-to-raster types \cite{ruiz2011digital, Chen2008}. This study focuses on the vector-to-vector conflation problem.

Map conflation presents several challenges. First, spatial features often suffer from positional inaccuracies due to differences in data collection and projection methods, making direct location comparisons unreliable. Second, matching is not always 1:1, as entities may evolve over time, requiring one-to-many or many-to-one mappings across GDBs~\cite{gorisha}. Third, manual or human-supervised approaches are costly and lack scalability, limiting practical deployment~\cite{canavosio2015hootenanny, gorisha, thesis}.

To address these challenges in the matching process, several methods have been developed~\cite{walter1999matching, li2010optimized, tong2014linear, li2011optimisation, lei2019optimal,mustiere2008matching, tong2009probability, almotairi2018using, zhang2008delimited, schafers2014simmatching}, which can be divided into three main groups: \circled{i} Optimization-based methods: Formulate the matching problem \LL{as an optimization problem and solve for the optimal solution}~\cite{walter1999matching, li2010optimized, li2011optimisation}. While finding the optimal solution, these methods are are time-consuming and not scalable. \circled{ii} Greedy-based or heuristic methods: To address the scalability issue, heuristics are used to quickly find a sub-optimal solution, sometimes resulting in poor quality~\cite{gorisha, song2011relaxation, liu2015progressive}. \circled{iii} Rankjoin-based method: To mitigate the limitations of previous groups, Agarwal et al.~\cite{gorisha, thesis} proposed a matching framework by adapting the classic Rank Join in databases, where each edge of the source GDB is modeled as a relation, to obtain the best matching roads between a source and a target geospatial database.

Each of these methods only addresses a subset of the above challenges, and furthermore, they all  suffer from four main limitations: \circled{1} All these approaches are designed for and limited to linear objects (e.g., roads and sidewalks). However, real-world digital maps also consist of information about complex non-linear entities (e.g., buildings, lakes, open spaces, etc.), and these methods are not simply extendable to match the non-linear objects. \circled{2} These methods assume pre-defined patterns/rules for the discrepancy of GDBs and cannot \textit{learn} the discrepancy of two digital maps in a data-driven manner. \circled{3} Most existing methods represent entities in GDBs by some heuristics (e.g., their locations, centers, etc.), leading them to missed information, and knowledge about the relative position and neighborhood of entities in GDBs. \circled{4} Entities in GDBs are often associated with rich information or metadata (e.g., the name of the building/street, its functionality, etc.), which can greatly help the matching process. Existing methods cannot simply be extended to take advantage of this information when it is applicable.

On the other hand, map merging, surprisingly, has attracted less attention~\cite{gorisha, haunert2005link, zhang2016automatic, harrie1999constraint, touya2013conflation}. All existing methods suffer from the following limitations: ~\circled{1} Inconsistency: existing methods do not consider the inconsistency that might be caused by the merging process. That is, the resulting merged map can have abnormally intersecting roads or even intersecting roads and buildings. \circled{2} Large perturbations: These methods do not optimize the perturbations that are required to merge maps. Accordingly, the resulting maps can have entities with a large perturbation w.r.t. their actual location, changing the structure and accuracy of the map.

To address the aforementioned drawbacks, in this paper, we introduce \model{}, a general machine-learning-based framework for map conflation. Given two GDBs $D$ and $D'$, in the first step, \model{} constructs a knowledge graph for each GDB, which represents entities as nodes and their relative positions as edges. In the second step, to match entities in a scalable and data-driven manner, \model{} uses a knowledge graph alignment method, which incorporates the information from both the relative positional relations and entities' metadata to match entities in the GDBs. This alignment method first employs multiple Graph Neural Networks (GNNs) to aggregate the neighborhood information within multiple hops and then uses an attention mechanism to learn important neighbors in an end-to-end manner. After matching entities, to merge the two GDBs, we formulate the problem as a  mixed integer linear programming problem to avoid overlaps of entities as well as minimize their perturbations. We make the following contributions:
\begin{enumerate}[topsep=0pt]
    \item We introduce a new method to represent geospatial databases as knowledge graphs and develop an algorithm to efficiently construct the knowledge graph representation of the GDBs. 
    \item We design a scalable end-to-end learning-based method to find the best matching between nodes in the two knowledge graphs. Considering the unique properties of digital maps, this method not only uses the relative positional features of the entities in multiple hops but it also can use the rich metadata information about entities for matching objects, whenever it is available. 
    \item We formulate the map merging task as a mixed  integer linear programming problem, where we limit the movement of each entity in each direction by a parameter $\epsilon > 0$. We next minimize the movements of entities while avoiding overlaps \LL{and} guaranteeing a consistent merged map. 
    \item Our extensive experimental evaluations on two open-source GDBs, OpenStreetMap (OSM) and Boston Open Data (BOD), and in different tasks and settings (map matching, merging, and conflation of linear/non-linear objects) show high precision and recall, and superior performance of \model{} compared to baselines.
\end{enumerate}
Additional details and analyses appear in the appendix.

\vspace{-1ex}
\section{Related Work}
\label{ch:RW-short}

Several methods have been proposed for map matching between geospatial databases, typically categorized into greedy heuristics, optimization-based approaches, and, more recently, rank-join methods. Greedy methods~\cite{zhang2008delimited, mustiere2008matching, schafers2014simmatching} iteratively match the most similar entities based on factors such as Hausdorff distance, angle, orientation, and topological connectivity. Algorithms like DSO and NetMatcher use hand-crafted thresholds to select matches, while SimMatching uses a similarity matrix updated by local topological consistency. Although these approaches are computationally efficient, they often assume unrealistic conditions (e.g., that one GDB is a subset of another), are tailored to specific object types (e.g., only roads), and rely on manually specified similarity measures, limiting their generalizability and robustness.

Optimization-based methods~\cite{walter1999matching, li2010optimized, li2011optimisation, tong2014linear, lei2019optimal} aim to find globally optimal matches. Early work modeled map matching as maximizing mutual information~\cite{walter1999matching}, while others formulated it as an assignment problem minimizing Hausdorff distance~\cite{li2010optimized} or employed iterative logistic regression~\cite{tong2014linear}. Some, like~\cite{lei2019optimal}, framed matching as a $p$-matching flow problem. While these methods better capture global consistency, they often ignore topological information, require significant meta-information, or become computationally expensive for large-scale GDBs.

Most relevant to our work, \citet{gorisha} proposed \emph{MAYUR}, which adapts the Pull Bound Rank Join algorithm~\cite{schnaitter2008evaluating} to match edges between databases efficiently. MAYUR represents a key advance by drawing from relational database techniques, but remains limited to matching linear features and relies on predefined similarity metrics.  \citet{new-rw-ali} instead developed a scalable HMM-based conflation pipeline between LRS basemaps and OSM, but their focus remains restricted to linear road networks.

Beyond matching, map merging methods like  Rubbersheeting~\cite{haunert2005link, gorisha, chen2006automatically, song2008automated, zhang2016automatic, katzil2005spatial} attempt to integrate unmatched entities. It models one map as a flexible membrane distorted to align with another using control points — matched entities from prior map matching — as anchors. Unmatched features are deformed based on interpolated displacement vectors between control points. Different methods vary in how they interpolate shift vectors; for instance,~\citet{haunert2005link} assume pre-defined links between spatial features, while~\citet{zhang2016automatic} derive displacement vectors via Delimited Stroke Oriented (DSO) matching~\cite{zhang2008delimited}. Other approaches incorporate geometric constraints to minimize Euclidean distances between corresponding features in the source and conflated databases after merging~\cite{harrie1999constraint, touya2013conflation}. 
Despite their popularity, rubbersheeting-based methods cannot guarantee consistency; overlapping or misaligned objects often remain after merging. Recently, \citet{gorisha} introduced a weighted averaging of shift vectors for matched terminal points, assigning weights inversely proportional to distances, but challenges of ensuring a fully consistent merged map persist. Moreover, \citet{new-rw-driving} introduced FlexMap Fusion, a modular HD-map conflation pipeline that aligns LiDAR-based vector maps with OSM through georeferencing and semantic enrichment for autonomous driving.

Recent advances in graph machine learning have motivated modeling road networks as graphs~\cite{ETA, gorisha, thesis, ETAsigmodHistorical, ETAGAN, ETAtrajectory, ETAstochasticRecurrent}. Learning road representations has led to state-of-the-art results in travel-time prediction, with early work adapting convolutional networks for spatial data~\cite{ETAtrajectory} and later approaches employing recurrent and GAN-based models~\cite{ETAstochasticRecurrent, ETAGAN}. However, existing methods typically focus only on linear features like roads and ignore the broader relational structure of digital maps, limiting their expressiveness.\solved{ In sections~\ref{ch:KG} and \ref{ch:map-matching}, we propose a unified framework that represents all map entities as a knowledge graph.  }
A more detailed review of related work can be found in Appendix~\ref{ch:RW}.

\label{ch:PN}
\section{Preliminaries and Notation}
We first present the  terminology and background concepts we use on geospatial databases (GDBs) and  knowledge graphs. 

\subsection{Terminology}\label{sec:terminology}

\begin{dfn}[Terminal Point]
{\rm
\textit{Terminal} points are either dead-ends or intersections of roads. A point is classified as terminal if it has: a single road line, more than two road lines, or two road lines diverging at an angle greater than \LL{a threshold} $\theta$. \qed} 
\end{dfn}
Conversely, all other points found within linear roads are considered \textit{intermediate} points.
The range of the angle between two segments is $[0, \theta_0]$, where $\theta_0 = 180^{\circ}$ or $\theta_0 = \pi \:\:\mathrm{rad}$. In our experiments, we consider the threshold $\theta$ as $\theta = 45^{\circ}$ or $\theta = \frac{\pi}{4} \:\:\mathrm{rad}$.

\begin{dfn}[Segment]~\label{dfn:segment}
{\rm A segment \(s = (p, n_1, n_2, \ldots, n_m, p\prime)\) consists of an ordered sequence of intermediate points $n_i$ between two terminal points $p, p\prime$, each connected by a straight line. These points define the road segment's shape and orientation. \qed} 
\end{dfn}
While segments are used to represent linear objects (e.g., roads and buildings) in the map, next, we define polygons, which are used to represent non-linear objects in digital maps. 

\begin{dfn}[Polygon]~\label{dfn:polygon}
{\rm A polygon is a 2D shape that is formed by a closed loop of at least three straight, interconnected sides.\qed} 
\end{dfn}

\begin{dfn}[Geospatial Database]
    {\rm A \textit{geospatial database} (GDB for short) is  a triple $G = (E, SE, F)$, where $E$ is the set of all non-linear entities in the map, $SE$ is the set of segments, and $F(e)$ represents the features of entity $e \in E\cup SE$. \qed} 
\end{dfn}
In GDBs, the provided features depend on the source of the data. Most of the time, these features include: the type of the object (e.g., building, sidewalks, etc.), the location of its points or centers, or other general metadata (e.g., address, year of construction, etc.).

As discussed in \autoref{ch:Introduction}, locations of objects in GDBs might be noisy. Next, we discuss circular error which is used to model this uncertainty in the data. 

\textbf{Circular Error.}
\textit{Circular Error} models the positional uncertainty of spatial features in a geospatial database, representing the maximum deviation \LL{between the true and recorded}   coordinates of points.

\begin{dfn}[Homologous Objects]
    {\rm \textit{Homologous} spatial objects are an ordered pair of spatial objects, each from a different GDB, representing the same physical entity on Earth. \qed} 
\end{dfn}

\begin{dfn}[Knowledge Graph]
    {\rm A knowledge graph $\GG$ is quadruple $\GG = (\E, \R, \T, \F)$, where $\E$ is the set of entities, $\R$ is the set of relations, $\T \subseteq \E \times \R \times \E$ is the set of triples \LL{connecting pairs of entities via relations}, and $\F: \E \rightarrow \mathbb{R}^{K}$ represents the contextual features of entities. \qed} 
\end{dfn}

\subsection{MLP-Mixer}\label{sec:mlp-mixer}
Comparing to modern architectures~\citep{transformer, behrouz2024titans, behrouz2025miras, behrouz2025atlas, dai2019transformer}, \textsc{MLP-Mixer} \cite{mlp-mixer} proposes a simple and efficient model family using multi-layer perceptrons (MLPs). The architecture has two main components: (1) Token Mixer, learning cross-feature dependencies, and (2) Channel Mixer, ensuring positional invariance akin to convolutions. The roles of the Token and Channel Mixer phases are outlined next:

\textbf{Token Mixer.}
Let $\mathbf{E}$ be the input of the \textsc{MLP-Mixer}, then the token mixer phase is defined as:
\begin{equation}
     \mathbf{H}_{\text{token}} = \mathbf{E} +  \mathbf{W}_{\text{token}}^{(2)} \sigma\left(   \mathbf{W}_{\text{token}}^{(1)} \texttt{LayerNorm}\left(  \mathbf{E}\right)^T  \right)^T,
\end{equation}
where $\sigma(.)$ is a nonlinear activation function (usually GeLU~\cite{gelu}). Since it feeds the input's columns to an MLP, it mixes the cross-feature information.

\textbf{Channel Mixer}.
Let  $\mathbf{H}_{\text{token}}$ be the output of the Token Mixer, then the Channel Mixer phase is defined as:
\begin{equation*}
     \mathbf{H}_{\text{channel}} = \mathbf{H}_{\text{token}} +  \mathbf{W}_{\text{channel}}^{(2)} \sigma\left(   \mathbf{W}_{\text{channel}}^{(1)} \texttt{LayerNorm}\left(  \mathbf{H}_{\text{token}}\right)  \right),
\end{equation*}
where $\sigma(.)$ is a nonlinear activation function (usually GeLU~\cite{gelu}).

\subsection{Problem Statement}\label{sec:problem-statement}
In map conflation, we have a source database and a target database. Following previous work~\cite{canavosio2015hootenanny, thesis, gorisha}, we assume that the source database is immutable, meaning that the spatial properties of its geometric features cannot be changed. The conflated database is created by aligning entities from the target to the source and merging unmatched entities and non-spatial attributes from the target into a copy of the source. This approach assumes the source offers broader coverage, while the target provides higher-quality data for specific regions~\cite{thesis}.

For map matching, we assume that for each entity in the map $e \in E$, we have a vector $h_e \in \mathbb{R}^{d}$ that encodes both its positional and contextual features. We use $H_E$ to denote the matrix whose rows are the encodings of entities in $E$. In Section~\ref{ch:map-matching} we discuss how we obtain these encodings. 

\begin{dfn}[Map Matching]
{\rm Given a source GDB $G_s = (E_s, SE_s, F_s)$ with its entity encodings $H_{E_s}$, a target GDB $G_t = (E_t, SE_t, F_t)$ with its entity encodings $H_{E_t}$, and a similarity function $\textsc{Sim}: (E_s \cup SE_s) \times (E_t \cup SE_t) \rightarrow \mathbb{R}^{\geq 0}$, we aim to find a \textit{matching} defined as:
$$\mathcal{M} = \left\{ (e_i^s, e_j^t) \;\middle|\; e_i^s \in E_s \cup SE_s,\; e_j^t \in E_t \cup SE_t,\; \textsc{Sim}(h_i^s, h_j^t) \geq \mathscr{S} \right\}$$
such that $\sum_{(e_j^t, e_i^s) \in \mathcal{M}} \textsc{Sim}(h_j^t, h_i^s)$ is maximized over all possible matchings. \qed}
\end{dfn}

Next, given the definition of the map matching problem, we formally define the problem of map merging as follows:

\begin{dfn}[Map Merging]
    {\rm Given an immutable source database $G_s = (E_s, SE_s, F_s)$, a target database $G_t = (E_t, SE_t, F_t)$, and a matching set $\mathcal{M}$ between entities of $G_s$ and $G_t$, the merged geospatial database of $G_s$ and $G_t$ is a conflated geospatial database $G^*$ obtained as follows. 
 (1) Every entity $e$ in  $\{E_s\cup SE_s\}$ is retained in $G^*$. (2) Let $\Delta \E \subset \{E_t  \cup SE_t \}$ be the set of target entities not covered by the matching set $\mathcal{M}$. For every entity $e \in \Delta \E$ there is a corresponding entity $e^* \triangleq \Omega(e) \in G^*$ where $\Omega(e)$ shows the new position of entity $e$ and updates the spatial coordinates of all its points. (3) Merging entities $e \in \Delta \E$ in their new position $\Omega(e)$ results in no inconsistencies in the geospatial database $G^*$, while \LL{the sum of perturbations of entities in $e^* \in G^*$ w.r.t. the original position of the corresponding entity $e \in G_t$ is minimized.} 
 \qed} 
 \end{dfn}

\begin{dfn}[Map Conflation]
    {\rm The \textit{map conflation} process consists of two primary steps: (a) map matching and (b) map merging. The map matching step identifies homologous entities between the map databases. Subsequently, during map merging, any additional unmatched entities from the target database are merged with the entities present in the source database. \qed} 
\end{dfn}

\section{Method and Framework}\label{sec:framework}

\subsection{Overview of the Framework}
Our proposed framework, KRAFT, consists of three main modules. First, we construct a knowledge graph representation for each geospatial database, where entities are nodes and spatial relationships are modeled as edges. This construction captures both the relative positional information and metadata of linear and non-linear spatial features. Next, we perform homologous entity matching by aligning the knowledge graphs, leveraging multi-hop graph neural network encoders and attention-based aggregation to robustly match entities under spatial noise and representation shifts. Finally, we identify unmatched entities and formulate the map merging task as a mixed integer linear programming (MILP) problem, allowing us to consistently integrate unmatched objects into the source map while minimizing spatial perturbations and avoiding inconsistencies. Figure~\ref{fig:framework} illustrates the overall architecture of the KRAFT framework. In the following sections, we provide a detailed description of each component.

\begin{figure*}[!ht]
    \centering
     \vspace{-2ex}
\includegraphics[width=0.7\linewidth]{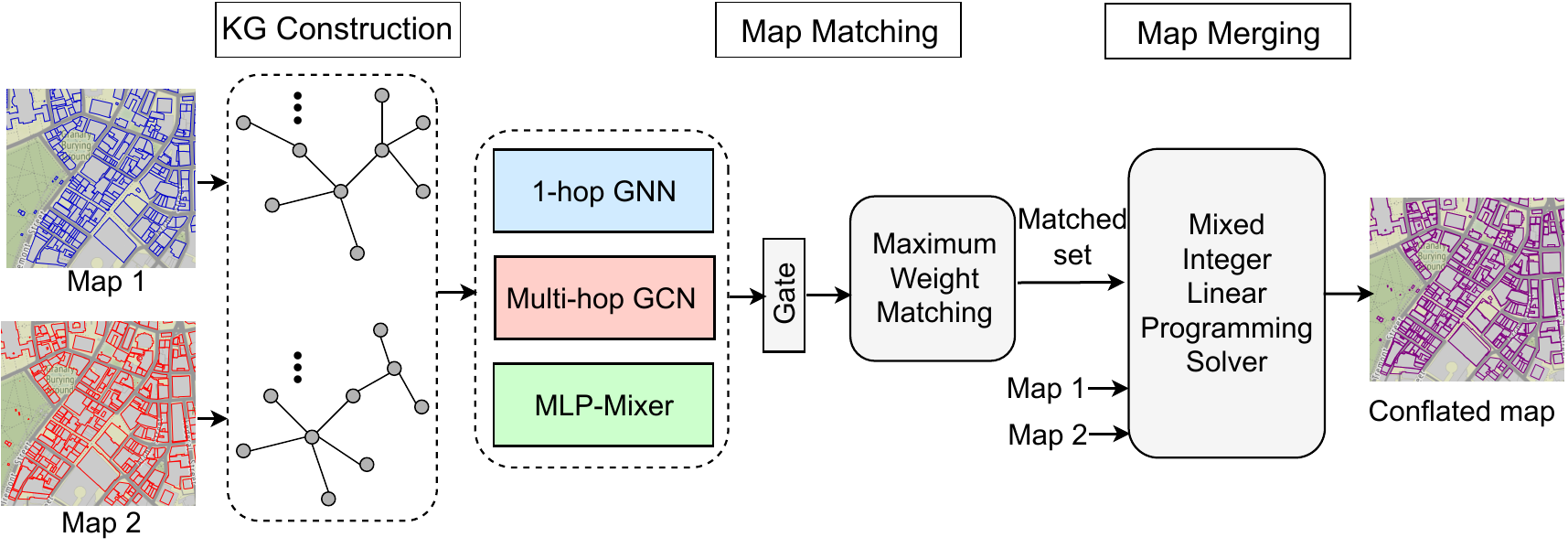}
 \vspace{-2ex}
    \caption{\model{} framework consists of four stages: (1) Knowledge graph construction, (2) Learning positional and contextual features (3) Matching via Maximum Weight Matching, and (4) Merging via Mixed Integer Linear Programming Solver. }
    \label{fig:framework}
\end{figure*}

\subsection{Knowledge Graph Construction} \label{ch:KG}
As discussed in Section~\ref{ch:Introduction}, a key limitation of existing map conflation methods is their reliance on heuristics to represent GDB entities (e.g., locations, centers, corners). Directly matching entities within GDBs, however, demands time-consuming algorithms ~\cite{gorisha, thesis}. To address this, we propose representing each GDB as a knowledge graph, enabling the use of deep learning to encode entities while preserving all their information.

 One of the main limitations of heuristics is their failure to capture an entity's neighborhood, which can be crucial for matching. To address this, given a GDB \( G = (E, SE, F) \), we center a \( K \times K \) grid at each non-linear entity \( e_n \in E \). While our method supports any \( K \), we use \( K = 3 \) for simplicity. With nine grid squares, we define nine relation types: \circled{1} Bottom, \circled{2} Bottom-Right, \circled{3} Right, \circled{4} Top-Right, \circled{5} Top, \circled{6} Top-Left, \circled{7} Left, \circled{8} Bottom-Left, and \circled{9} Close.

For each linear object (e.g., roads, sidewalks) \( e_l \in SE \), we define a buffer of width \( \lambda > 0 \) around \( e_l \), conforming to its shape as determined by connected segments. We then define two relation types: \circled{i} ``Inside'' if a non-linear entity is within the buffer, and \circled{ii} ``Connected'' if a linear entity shares an endpoint with \( e_l \).
We construct a knowledge graph $\GG = (\E, \R, \T, \F)$, where $\E = E \cup SE$ is the set of entities in the GDB and $\R$ is the set of the above relations. For each non-linear entity $e_n \in E$, if an entity $e' \in E \cup SE$ is in one of the squares of its grid, we add a triple $(e_n, r, e')$ to $\T$, where $r \in \R$ is their corresponding relation. For each linear entity $e_l \in SE$ we add triple $(e_l, \text{``Inside''}, e')$ (resp. $(e_l, \text{``Connected''}, e')$) to $\T$ if a non-linear entity $e' \in E$ is inside the $e_l$'s buffer (resp. if a linear entity $e' \in SE$ has an endpoint within distance $\delta$ from the endpoints of
$e_l$).  Algorithm~\ref{alg:KG-construction} in Appendix~\ref{sec:appendix_algorithms} describes this process.

\begin{figure}
\centering
{{\includegraphics[width=0.35\textwidth]{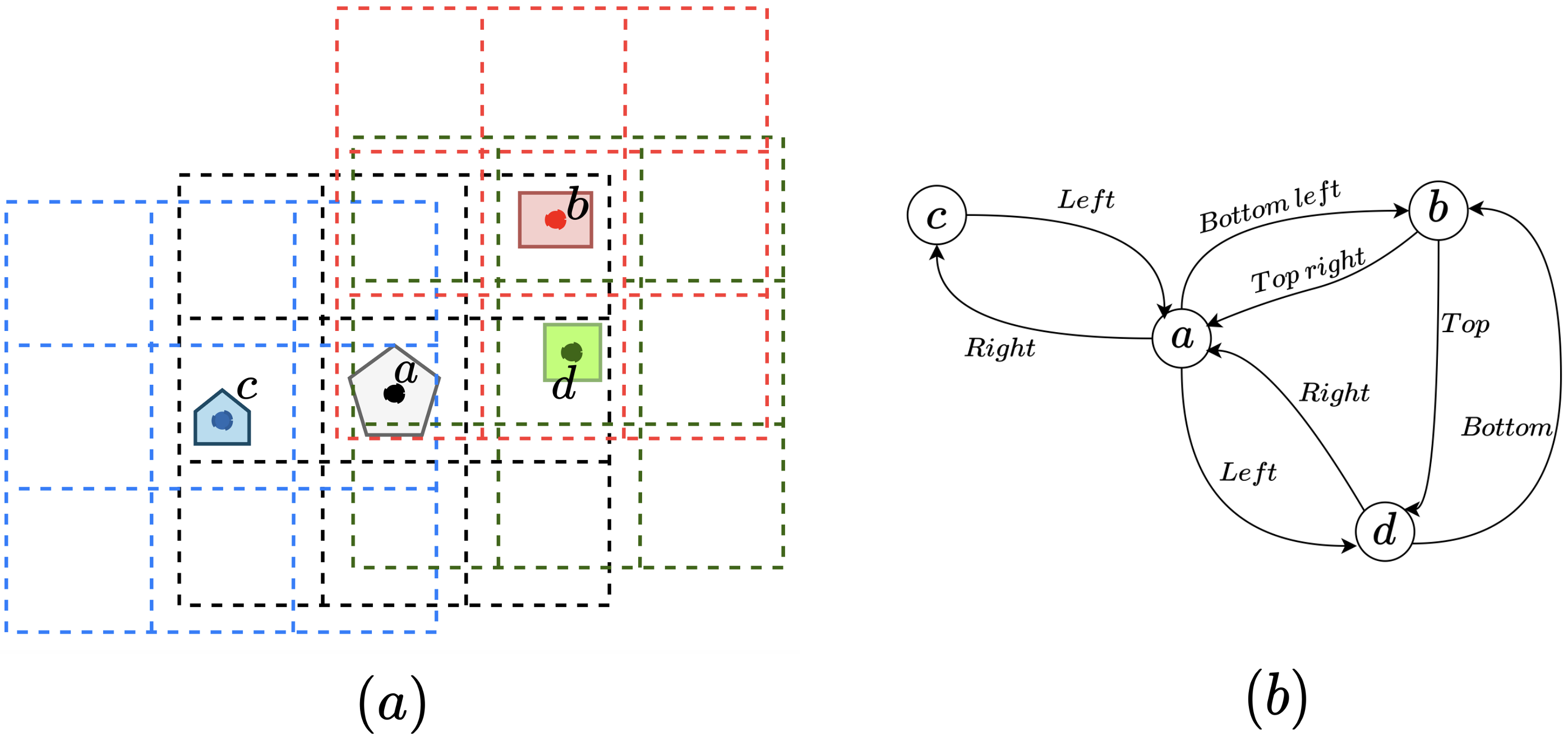} }}

    \vspace{-2ex}
    \caption{(a) Four polygons $a, b, c,$ and $d$ with local grids centered at each polygon's center. (b) Corresponding nodes in the knowledge graph, each connected to others within its grid using relative relation types.}
    \label{fig:build_kg}
\end{figure}

\begin{example}
Figure~\ref{fig:build_kg} (a) shows four polygons ($a$–$d$), each represented as a node in the knowledge graph. A grid of width $\mu$ is generated around each polygon’s center to define its neighborhood. In $a$'s grid, $b$, $c$, and $d$ appear at the top right, top left, and right. In $b$'s grid, $a$ and $d$ are at the bottom left and bottom. For $c$, only $a$ appears (right); and for $d$, both $a$ and $b$ appear (left and top). Figure~\ref{fig:build_kg} (b) shows the resulting knowledge graph.

\end{example}

\begin{example}
Figure~\ref{fig:linear-objects}(a) shows two non-linear entities ($b_1$, $b_2$) and three segments ($S_1$, $S_2$, $S_3$). Knowledge graph nodes are created for each, with a buffer of $\lambda$ around segments. Since $b_1$ and $b_2$ lie within $S_1$’s buffer, they connect to $S_1$. $S_2$ and $S_3$ connect to $S_1$ via "connected" relations, as their endpoints are within $\delta$ of $S_1$’s endpoint $p_2$.

\end{example}

\begin{figure}
\centering
{{\includegraphics[width=0.35\textwidth]{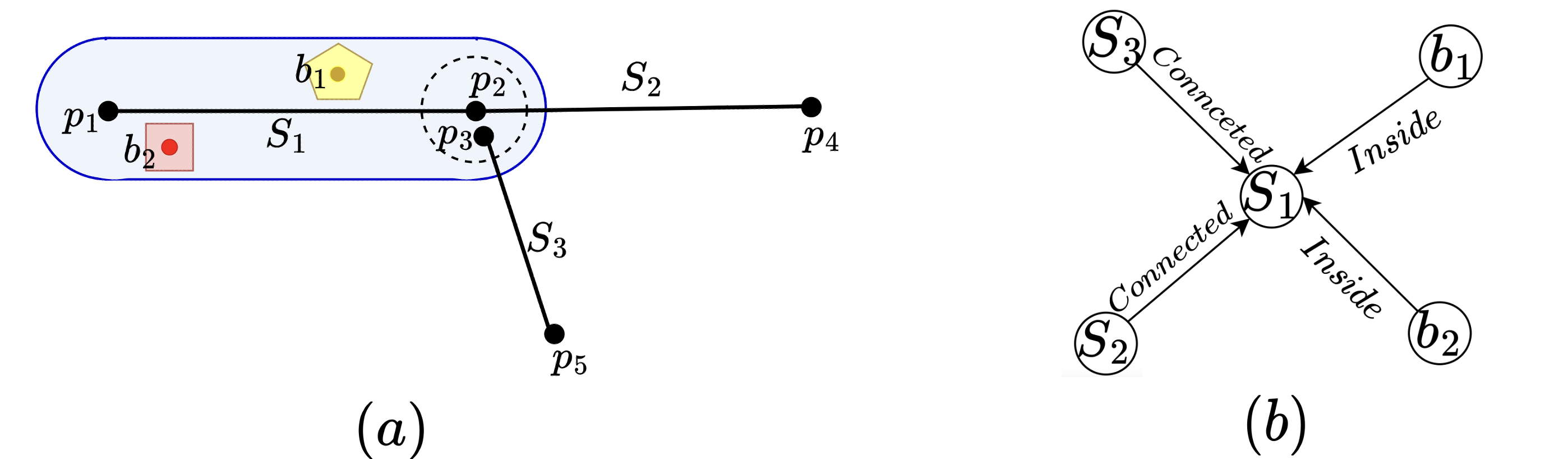} }}

    \vspace{-2ex}
    \caption {(a) Neighborhood around segment $S_1 = (p_1, p_2)$ in a geospatial database. (b) Adjacent nodes to $S_1$'s node in the knowledge graph with relation types: polygons $b_1$ and $b_2$ lie within $S_1$'s buffer and connect via "Inside." Segment $S_2 = (p_2, p_4)$ shares endpoint $p_2$ with $S_1$, and $S_3 = (p_3, p_5)$ has $p_3$ within $\delta$ of $p_2$; both connect to $S_1$ via "Connected."}
    \label{fig:linear-objects}
\end{figure}



\textbf{Time Complexity.} Using R-Trees~\cite{R-tree} to index entities in the GDB, we can  search each entity or segment in $\mathcal{O}(\log |E \cup SE|)$ time on average. 
Since for each entity (resp. segment) we explore its neighborhood within a grid (resp. buffer), we need $\mathcal{O}(|E \cup SE|$ $\log(|E \cup SE|) \times N_{\max})$ operations, where $N_{\max}$ is the maximum number of neighbors within a grid of with width $\mu$. In practice, we find that $N_{\max} \ll \frac{|E \cup SE|}{\log(|E \cup SE|)}$, showing the efficiency of this algorithm compared to the naive implementation.

\subsection{Map Matching}
\label{ch:map-matching}
Using  Algorithm~\ref{alg:KG-construction}, we first construct knowledge graphs $\GG_s = (\E_s, \R_s, \T_s, \F_s)$ and $\GG_t = (\E_t, \R_t, \T_t, \F_t)$ to represent the source and target GDBs respectively. Next, we encode each entity in the knowledge graphs by learning both its positional and contextual features. Finally, we use the obtained encodings to match the entities of source and target GDBs. To learn both the positional and contextual features, we use three modules: \circled{1} 1-hop Encoding, \circled{2} Multi-hop Encoding,
 and \circled{3} Feature Encoding. The first two modules are responsible to learn the structural properties while Feature Encoder is used to learn contextual features. 

As discussed in Section ~\ref{ch:Introduction}, the location of entities in the source and target GDBs might be noisy. This noise can even affect the process of knowledge graph construction.  
Figure~\ref{fig:2hop} in Appendix~\ref{sec:matchingappend} illustrates even how small noise can break direct connections in the knowledge graph but relationships can be preserved through 2-hop neighborhoods, thus aiding alignment.  

Expanding on this, leveraging $k$-hop neighborhoods ($k \geq 2$) helps mitigate location noise in GDBs. However, two key challenges arise: (1) The 1-hop neighborhood often provides the most crucial information~\cite{alinet}, and treating all hops equally may degrade performance. (2) Entities in higher-order neighborhoods contribute differently to an entity’s encoding, requiring an adaptive approach to integrate their information effectively.

\noindent
To this end, we need to learn which entities in the higher-order neighborhoods are more important than others. However, using a single $k$-layered graph neural network, and recursively propagating the information from $k$-hop neighborhoods is inefficient and would treat all entities alike, missing the differential importance of connections in higher-order neighborhoods~\cite{alinet}.   
To address the challenges, we separate the 1-hop and multi-hop encoding processes and use a learnable gate to combine their information. This gate learns when the 1-hop neighborhood is more informative than multi-hop neighborhoods, addressing Challenge (1). We also use a GNN to encode 1-hop neighbors and a modified Graph Attention Network (GAT)~\cite{GAT} to encode higher-order neighborhoods, simultaneously learning the importance of entities in these neighborhoods, thus addressing Challenge (2).

\noindent
\textbf{Knowledge Graph 1-hop Encoding.}
Graph neural networks effectively learn local graph structures through message passing~\cite{gnnbook, graphmamba}. To capture 1-hop information, we apply layer-wise GNN propagation with self-feature modeling, defined as:

\begin{align}\nonumber
    h_e^{(\ell + 1)} = \Dr\bigg\{&\sigma \bigg(h_e^{(\ell)} {\W_{\text{s}}}^{(\ell + 1)} + \sum_{e' \in N(e)} [\frac{h_{e'}^{(\ell)}}{\sqrt{p_{e'} p_e}} {\W}^{(\ell + 1)} + {\bb}^{(\ell + 1)}] \bigg)\bigg\},
\end{align}
where ${h_e}^{(\ell + 1)} \in \mathbb{R}^{d^{(\ell + 1)}}$ is the learned new features of entity $e$ in the $(\ell + 1)$-th GCN layer, ${h_e}^{(\ell)} \in \mathbb{R}^{d^{(\ell)}}$ is the hidden feature of $e$ in $\ell$-th GNN layer, and ${\W_{\text{s}}}^{(\ell + 1)}, {\W}^{(\ell + 1)} \in \mathbb{R}^{d^{(\ell)} \times d^{(\ell + 1)}}$, and ${\bb}^{(\ell + 1)} \in \mathbb{R}^{d^{(\ell + 1)}}$ are trainable weights. $\sigma(.)$ is a nonlinearity, e.g., ReLU, and $\Dr(.)$ is the dropout method~\cite{dropout} \LL{used} to avoid overfitting. Given an entity $e \in \E$ and its neighbor $e' \in N(e)$, we use the term $\frac{1}{\sqrt{p_e p_{e'}}}$  to normalize messages coming from the neighborhood of $e$, where $p_e = |N(e)| + 1$.  The input feature of entity $e$ in the first layer, $h_e^{(0)} \in \mathbb{R}^d$, is the normalized feature vector $\F(e)$.

\noindent
\textbf{Knowledge Graph Multi-hop Encoding.}
As we discussed earlier, multi-hop neighborhoods of each node provide us with complementary information that can mitigate the noise in the location of entities in the GDBs.  However, entities in higher-order neighborhoods have different importance in multi-hop encoding. To address this, we use a modified Graph Attention Network (GAT)~\cite{GAT}, which learns both neighborhood structures and the varying importance of neighbors, enabling weighted message aggregation.

We define \( N_k(e) \) as the set of entities at a distance of exactly \( k \) from \( e \).
For the sake of simplicity, we use $k = 2$ throughout the rest of this study. However, all equations and processes can simply be extended to any $k \geq 2$. Next, we use the following propagation rule:
\begin{align}\label{eq:multi-hop-propagation}\nonumber
    \psi_e^{(\ell + 1)}\!\!= \Dr\bigg\{&\sigma \bigg( {\W_k}^{(\ell + 1)} \psi_e^{(\ell)} \!\!+ \!\!\hspace{-2ex}\sum_{e' \in N_k(e)}\!\!\!\! [\frac{ \alpha_{e, e'} \times \psi_{e'}^{(\ell)}}{\sqrt{p^k_{e'} p^k_e}} {\W_k}^{(\ell + 1)} \!\!+ {\bb_k}^{(\ell + 1)}] \bigg)\bigg\},
\end{align}
where ${\W_k}^{(\ell + 1)}$ and ${\bb_k}^{(\ell + 1)}$ are learnable parameters, $p^k_e$ denotes its $k$-th degree plus one, i.e., $p^k_e = |\mathcal{N}_k(e)| + 1$, and $\alpha_{e, e'}$ is a learnable attention weight for entities $e$ and $e'$. Again, term $\frac{1}{\sqrt{p_{e'}^k p_e^k}}$ is used to normalize sent messages and avoid inconsistency of the scale of the encodings. 
The input feature of entity \( e \) in the first layer, \( \psi_e^{(0)} \in \mathbb{R}^d \), is the normalized feature vector \( \F(e) \).

To learn attention weights, one approach is to use GAT~\cite{GAT} as is, applying a shared linear transformation to all entities. However, in knowledge graphs, central entities and neighbors differ due to varying node types and connections. Following Sun et al.~\cite{alinet}, we use separate linear transformations for them and define $\alpha_{e, e'}$ as:
\begin{equation}\label{eq:attn}
    \alpha_{e, e'} = \texttt{Softmax}\left( \sigma\left(  [\mathbf{M}_c \psi^{(\ell)}_e]^T [\mathbf{M} \psi^{(\ell)}_{e'}] \right) \right),
\end{equation}
where $\mathbf{M}_c$ and $\mathbf{M}$ are learnable matrices that we use to transform the central entity and its neighbors.

\textbf{Feature Encoding.}
 GDBs often provide rich but potentially noisy meta information beneficial for conflation, such as building functionality or name. We use a separate module to encode these contextual features while accounting for their possible noise or misleading nature. To this end, we employ a gate module that learns to integrate structural and contextual features in an end-to-end, data-driven manner.

In geospatial data, the dependencies across features are also important. For example, given the location of entities as their contextual features $(x_i, y_i)$, cross-sample dependencies help encode both $x$ and $y$ values, while cross-feature dependencies capture map scale. We use an \mlpmixer{}~\cite{mlp-mixer} module to encode contextual features by learning both cross-data sample and cross-feature dependencies. 
\mlpmixer{}~\cite{mlp-mixer} is a family of simple, efficient models based on multi-layer perceptrons, commonly used in computer vision. It leverages both cross-feature and cross-data sample dependencies through token and channel mixers. In the token mixer, a 2-layer MLP encodes the feature of each entity, while the channel mixer transposes the output of the token mixer and uses a 2-layer MLP to capture cross-feature dependencies. Therefore, the channel mixer is defined as:

\begin{equation}\label{eq:channel-mixer}
     \Phi_{\text{out}} = \texttt{MLP} \left(\mathbf{H}_{\text{token}} +  \sigma\left(   \texttt{LN}\left(  \mathbf{H}_{\text{token}}\right) \mathbf{W}_{\text{channel}}^{(1)} \right) \mathbf{W}_{\text{channel}}^{(2)}\right) ,
\end{equation}
where \begin{equation}\label{eq:token-mixer}
     \mathbf{H}_{\text{token}} = \mathbf{F} +  \mathbf{W}_{\text{token}}^{(2)} \sigma\left(   \mathbf{W}_{\text{token}}^{(1)} \texttt{LN}\left(  \mathbf{F}\right)  \right).
\end{equation}
Here, $\mathbf{W}_{\text{token}}^{(1)}$, $\mathbf{W}_{\text{token}}^{(2)}$, $\mathbf{W}_{\text{channel}}^{(1)}$ and $\mathbf{W}_{\text{channel}}^{(2)}$  are learnable parameters,
$\sigma(.)$ is the activation function, and we use GeLU~\cite{gelu} following the MLP-Mixer \cite{mlp-mixer}. The layer normalization is used as it enables smoother gradients, faster training, and better generalization accuracy~\cite{layer-norm1}. 
In Equations~\ref{eq:channel-mixer} and~\ref{eq:token-mixer}, we use a residual network with skip connections for identity mapping to mitigate vanishing gradients. The $\texttt{MLP}(.)$ in Equation~\ref{eq:channel-mixer} adjusts the output to an arbitrary dimension. We represent matrix $\Phi$ by its rows, $\Phi = \begin{pmatrix} \varphi_1, \varphi_2, \dots, \varphi_{|\E|} \end{pmatrix}^T$, and use $\varphi_e$ for the feature encoding of entity $e \in \E$.

\noindent
\textbf{Mixer Gate.}
We first encode the 1-hop neighborhood, multi-hop neighborhood, and entity features separately to capture structural and contextual properties. To integrate these encodings, we aggregate them into a final entity representation. However, their importance varies, and features may be unavailable or noisy. Inspired by Sun et al.~\cite{alinet} and skip connections in neural networks~\cite{skip-connection}, we use a learnable gate to combine information from the 1-hop, multi-hop, and feature encoders.
Given the hidden representations $h_e^{(\ell)}$, $\psi_e^{(\ell)}$, and $\varphi_e$, we have:
\begin{equation}\label{eq:gate}
    \Upsilon_e^{(\ell)} = \zeta_e h_e^{(\ell)} + \eta_e \psi_e^{(\ell)} + (1 - (\frac{\zeta_e + \eta_e}{2})) \varphi_e,
\end{equation}
where $\zeta_e$ and $\eta_e$ are defined as: $\zeta_e = \sigma\left(  \mathbf{W}^{1}_{\text{gate}} \psi_e^{(\ell)}  +  \mathbf{W}^{2}_{\text{gate}} \varphi_e + \mathbf{b}_{\text{gate}}  \right)$ and $\eta_e = \sigma\left(  \mathbf{W}^{1}_{\text{gate}} h_e^{(\ell)}  +  \mathbf{W}^{2}_{\text{gate}} \varphi_e + \mathbf{b}_{\text{gate}}  \right)$. Here $\mathbf{W}^{1}_{\text{gate}}, \mathbf{W}^{2}_{\text{gate}}$, and $\mathbf{b}_{\text{gate}}$ are learnable parameters.

\noindent
Equation~\ref{eq:gate} computes entity representations at each neural network layer. A simple approach is to use $\Upsilon_e^{(L)}$, where $L$ is the final layer. However, each layer captures information from the 1-hop, multi-hop, and feature encoders at different granularities. For instance, $\Upsilon_e^{(1)}$ encodes $e$ using its 1-hop, $k$-hop neighborhoods, and features, while $\Upsilon_e^{(2)}$ extends this to 2-hop and $2k$-hop neighborhoods. To leverage the complementary information across layers, we use the following representation:\vspace{-2ex}
\begin{equation}\label{eq:final-enc}
    \mathbf{h}_e = \bigoplus_{\ell = 1}^{L} \frac{\Upsilon_e^{(\ell)}}{||\Upsilon_e^{(\ell)}||_{2}},
\end{equation}
where $\bigoplus$ denotes concatenation and $||.||_2$ is the L2-norm.

\noindent
\textbf{Training and Loss Function.}
We expect aligned entities to have small distances and unaligned entities to have large distances. To enforce this, we use contrastive alignment loss in training.
Let $\mathcal{A}^+$ be the set of ground-truth aligned entities, i.e., $\mathcal{A}^+ = \{ (e_s, e_t) | e_s \in \E_s, e_t \in \E_t \:\: \text{s.t.} \:\: \text{$e_s$ and $e_t$ are the same entity on Earth.}  \}$. 
For unaligned entities, we generate negative samples by replacing one entity in each aligned pair $(e_s, e_t) \in \mathcal{A}^+$ with a randomly selected $e'_s \in \E_s$ or $e'_t \in \E_t$, forming the set $\mathcal{A}^-$. To minimize distances between aligned entities and maximize those between unaligned pairs, we use the following loss function:
\begin{equation}\label{eq:contrast-loss}
    \mathcal{L}_{\text{contrast}} = \hspace{-13pt}\sum_{(e_s, e_t) \in \mathcal{A}^+} \hspace{-13pt} \| \mathbf{h}_{e_s} - \mathbf{h}_{e_t} \|_2 
    + \beta \hspace{-12pt}\sum_{(e'_s, e'_t) \in \mathcal{A}^-} \hspace{-13pt} \max \left\{ 0, \lambda - \| \mathbf{h}_{e'_s} - \mathbf{h}_{e'_t} \|_2 \right\}
\end{equation}
where $\beta$ and $\lambda$ are hyperparameters, and $\|\cdot\|_2$ is L2-norm. Hyperparameter $\lambda$ determines how far the encodings of unaligned entities should be. Also, $\beta$ is used to weight positive and negative samples in the training phase to balance their contribution to the learning process.

\noindent
As discussed, The multi-hop neighborhood of an entity provides complementary information by inferring connection semantics. For instance, in Figure~\ref{fig:2hop}(b), $\mathbf{c} \overset{\text{Bottom''}}{\rightarrow} \mathbf{b}$ and $\mathbf{b} \overset{\text{Top-Right''}}{\rightarrow} \mathbf{a}$ imply that $\mathbf{c}$ is close to and on the right of $\mathbf{a}$. Our model aims to learn entity encodings that capture such inferences and relation semantics. Using the translational assumption~\cite{translating-embeddings}, we interpret $\mathbf{h}_{e} - \mathbf{h}_{e'}$ as the semantic connection between $e$ and $e'$. Thus, if relation $r$ holds between $e$ and $e'$, then $\mathbf{h}_{e}$ should be close to $\mathbf{h}_{e'} + \mathbf{h}_r$. While the loss function defined in Equation~\ref{eq:contrast-loss} can minimize (resp. maximize) the distance between aligned (resp. unaligned) pairs, it cannot capture the semantics of connections in knowledge graphs based on translational assumption. To address this limitation and to avoid the overhead of parameters, for a relation $r\in \R$ we consider its encoding, $\Theta_{r}$, as the average of the differences between its related entity embeddings:
\begin{equation}\label{eq:relation-encoding}
    \Theta_{r} = \frac{1}{|\T_r|} \sum_{(e, r, e') \in \T} \left( \mathbf{h}_e - \mathbf{h}_{e'}\right), 
\end{equation}
where $\T_r = \{ (e, e') | (e, r, e') \in \T \}$. Next, we use the following relation loss for refinement:
\begin{equation}\label{eq:relation-loss}
    \mathcal{L}_{\text{semantics}} = \sum_{r \in \R} \frac{1}{|\T_r|} \sum_{(e, r, e') \in \T} \left|| \mathbf{h}_e - \mathbf{h}_{e'} - \Theta_r \right||.
\end{equation}

\noindent
The semantic loss function is to ensure that the distance between each pair of entities linked by a particular type of connection remains the same (equal to the encoding of the relation type).
To learn both relation semantics and entity encodings, we minimize the weighted sum of contrastive and relational semantic losses:
\begin{equation}\label{eq:final-loss}
    \mathcal{L} = \mathcal{L}_{\text{contrast}} + \alpha \:\mathcal{L}_{\text{semantics}}.
\end{equation}
The training procedure of the model is shown in Algorithm~\ref{alg:KG-encoder-training} in Appendix~\ref{sec:appendix_algorithms}. 

\noindent
\textbf{Map Matching.}
Algorithm~\ref{alg:matching} shows the map-matching process in \model. Given source and target GDBs, $G_s = (E_s, SE_s, F_s)$ and $G_t = (E_t, SE_t, F_t)$, we first construct their corresponding knowledge graph by using the procedure based on Algorithm~\ref{alg:KG-construction}, $\GG_s = (\E_s, \R_s, \T_s, \F_s)$ and $\GG_t = (\E_t, \R_t, \T_t, \F_t)$. Next, we train the knowledge graph encoding model as described in Algorithm~\ref{alg:KG-encoder-training}. Once we train our model by minimizing the objective function in Equation~\ref{eq:final-loss}, we use the entity encodings as well as area similarity to calculate the similarity of each pair of entities $(e, e')$, where $e \in \E_s$ and $e' \in \E_t$.  
As discussed in Section~\ref{ch:KG}, the knowledge graph structure captures the relative positions of entities, making the representation robust to shift and rotation. However, it does not capture the shape and area of objects on the map. Therefore, we define the similarity between two objects $e \in \E_s$ and $e' \in \E_t$ as
\begin{equation}\label{eq:similarity}
    \textsc{Sim}(e, e') =  \tau \times \textsc{Sim}_{\text{KG}}(\mathbf{h}_e, \mathbf{h}_{e'}) +  (1 - \tau) \times \textsc{Sim}_{\text{Area}}(e, e'),
\end{equation}
where $\textsc{Sim}_{\text{KG}}(.)$ captures the similarity of $e$ and $e'$ based on the knowledge graph encoding, and $\textsc{Sim}_{\text{Area}}(.)$ captures the similarity of the position and the shape of $e$ and $e'$ on the map. In our experiments, we use cosine similarity as $\textsc{Sim}_{\text{KG}}(.)$, Jaccard similarity as $\textsc{Sim}_{\text{Area}}(.)$, and $\tau = 0.5$.

\noindent
Next, we aim to find a weighted match between entities, $\mathbf{M} = \{ (e, e')| e \leftrightarrow e' \:\: \text{s.t.}\:\: e \in \E_s, e' \in \E_t\}$, where $e \leftrightarrow e'$ represents the matched entities, such that the cumulative similarity of matched entities be maximized, i.e.:
\begin{equation}\label{eq:matching}
    \mathbf{M} = \underset{\hat{\mathbf{M}} \: \in \: \mathbb{M}}{\arg \max} \sum_{(e, e') \in \hat{\mathbf{M}}}\textsc{Sim}(e, e'),
\end{equation}
where $\mathbb{M}$ is the set of all possible matchings. To solve the above weighted matching problem, we convert it to the problem of finding Minimum Weight Matching in Bipartite Graphs~\cite{matching}, which is solvable in $\mathcal{O}(\max\left\{ |\E_s|, |\E_t| \right\}^3)$. The details of the conversion can be found in Appendix~\ref{sec:maatchalgappendix}.

\subsection{Map Merging}
\label{ch:map-merging}

After matching, some entities remain unmatched between two GDBs since each may contain unique information. To enrich the source GDB, unmatched entities from the target GDB must be merged, but adding them directly can cause inconsistencies. Our map merging approach ensures consistent integration by first adding unmatched entities, then applying operations (e.g., shift, resize) to resolve conflicts. Since shifting one object may create new overlaps, we define operations mathematically and formulate the merging problem as a mixed integer linear program, ensuring no overlaps while minimizing changes to the map.

\noindent
In GDBs, objects, especially non-linear ones (e.g., buildings), can have complex, non-convex shapes, making mathematical formulation challenging. To address this, we approximate all objects, both linear and non-linear, using a Minimum Bounding Rectangle—a rectangle with sides parallel to the $\mathbf{x}$ and $\mathbf{y}$ axes that minimally encloses the shape.
Accordingly, from now on, we approximate each object $e \in E \cup SE$ and represent it by four points $p_1^e = \begin{pmatrix}
    x^e_1 \\
    y_1^e
\end{pmatrix}, p_2^e = \begin{pmatrix}
    x^e_2 \\
    y_1^e
\end{pmatrix}, p_3^e = \begin{pmatrix}
    x^e_1 \\
    y_2^e
\end{pmatrix},$ and $p_4^e = \begin{pmatrix}
    x^e_2 \\
    y_2^e
\end{pmatrix}$, which are the endpoints of its minimum bounding rectangle. We formulate the operations like shifting, reshaping, etc. as the following operation:\vspace{-1ex}
\begin{dfn}[$\mathbf{\epsilon}$-shift Operation]
Given $\varepsilon^x_c, \varepsilon^y_c, \varepsilon^x_1, \varepsilon^y_1, \varepsilon^x_2,$ and $\varepsilon^y_2 \in \mathbb{R}$, let $\mathbf{\epsilon} = (\varepsilon^x_c, \varepsilon^y_c, \varepsilon^x_1, \varepsilon^y_1, \varepsilon^x_2, \varepsilon^y_2)$, we define $e' = (p_1^{e'}, p_2^{e'}, p_3^{e'}, p_4^{e'})$ as the $\mathbf{\epsilon}$-shift of a shape $e = (p_1^e, p_2^e, p_3^e, p_4^e)$, defined as: 
\vspace{-4ex}
\begin{align}\nonumber
    &p_1^{e'} = \begin{pmatrix}
    x^e_1 + \varepsilon^x_1 + \varepsilon^x_c\\
    y_1^e + \varepsilon^y_1 + \varepsilon^y_c
\end{pmatrix}\:\:\:\:\:\:\:\: p_2^{e'} = \begin{pmatrix}
    x^e_2 + \varepsilon^x_2 + \varepsilon^x_c\\
    y_1^e + \varepsilon^y_1 + \varepsilon^y_c
\end{pmatrix}\\
    &p_3^{e'} = \begin{pmatrix}
    x^e_1 + \varepsilon^x_1 + \varepsilon^x_c\\
    y_2^e + \varepsilon^y_2 +\varepsilon^y_c
\end{pmatrix} \:\:\:\:\:\:\:\: p_4^{e'} = \begin{pmatrix}
    x^e_2 + \varepsilon^x_2 + \varepsilon^x_c\\
    y_2^e + \varepsilon^y_2 + \varepsilon^y_c
\end{pmatrix}.
\end{align}
\end{dfn}

\noindent
It is simple to see that $\mathbf{\epsilon}$-shift operation is a generalization of many transitions (e.g., shift, resize, etc.). For example, when $\varepsilon^x_c = \varepsilon_1 = \varepsilon_2 = \varepsilon_3 = \varepsilon_4 = 0$ and $\varepsilon^y_c = \varepsilon$, this operation is equivalent to simple shifting an object by $\varepsilon$ in the direction of y-axis. The main intuition of this operation is that we allow shifting each of the four sides of the rectangle by $\varepsilon_i$ and/or shifting its center by $\varepsilon^x_c$ and $\varepsilon^y_c$.

\noindent
Given two GDBs, \( G_s = (E_s, SE_s, F_s) \) and \( G_t = (S_t, SE_t, F_t) \), we assume the source GDB, \( G_s \), is immutable, following prior studies. Thus, we add unmatched entities from \( G_t \) to \( G_s \) and apply an \( \mathbf{\epsilon} \)-shift operation to resolve inconsistencies. Given overlapping rectangles—blue (target GDB) and black (immutable source GDB)—we determine an optimal \( \mathbf{\epsilon} \)-shift for the blue rectangle to eliminate overlap.  Overlaps fall into three cases: \circled{1} case 1: A blue vertex is inside the black rectangle, \circled{2} case 2: no blue vertex is inside, but a black vertex is within the blue rectangle, and \circled{3} case 3: neither contains a vertex of the other. \autoref{fig:merge} illustrates these cases. We use linear inequalities to ensure no overlap remains after shifting. Since black rectangles are fixed, we formulate constraints accordingly and apply them iteratively for each pair of neighboring rectangles, as identified using an R-tree data structure.Also, we represent the four vertices of the black rectangle by $p_1 = \begin{pmatrix}
    a_1 \\
    b_1
\end{pmatrix}, p_2 = \begin{pmatrix}
    a_2 \\
    b_1
\end{pmatrix}, p_3 = \begin{pmatrix}
    a_1 \\
    b_2
\end{pmatrix}$, and $p_4 = \begin{pmatrix}
    a_2 \\
    b_2
\end{pmatrix}$, and four vertices of the blue vertices by $p'_1 = \begin{pmatrix}
    a'_1 \\
    b'_1
\end{pmatrix}, p'_2 = \begin{pmatrix}
    a'_2 \\
    b'_1
\end{pmatrix}, p'_3 = \begin{pmatrix}
    a'_1 \\
    b'_2
\end{pmatrix}$, and $p'_4 = \begin{pmatrix}
    a'_2 \\
    b'_2
\end{pmatrix}$. Note that after $\epsilon$-shifting the blue rectangle, its vertices are $p'_1 = \begin{pmatrix}
    a'_1 + \varepsilon^x_1 + \varepsilon^x_c \\
    b'_1 + \varepsilon^y_1 + \varepsilon^y_c
\end{pmatrix}, p'_2 = \begin{pmatrix}
    a'_2 + \varepsilon^x_2 + \varepsilon^x_c\\
    b'_1 + + \varepsilon^y_1 + \varepsilon^y_c
\end{pmatrix}, p'_3 = \begin{pmatrix}
    a'_1 + \varepsilon^x_1 + \varepsilon^x_c\\
    b'_2 + + \varepsilon^y_2 + \varepsilon^y_c
\end{pmatrix}$, and $p'_4 = \begin{pmatrix}
    a'_2 + \varepsilon^x_2 + \varepsilon^x_c\\
    b'_2 + + \varepsilon^y_2 + \varepsilon^y_c
\end{pmatrix}$.

\noindent
\textbf{Case 1.} \autoref{fig:merge}(a) shows the first case. To make sure that this case would not happen we have the following statements:
\begin{align*}\label{st:11}
    \hspace*{-1ex}&\textbf{if} \: a_1 \leq a_1' + \varepsilon_1^x + \varepsilon^x_c \leq a_2: \left( b'_1 +  \varepsilon^y_1 + \varepsilon^y_c \geq b_1 \right) \text{or} \left( b_1' + \varepsilon^y_1 + \varepsilon^y_c \leq b_2 \right) \\ 
    \hspace*{-1ex}&\textbf{if} \: a_1 \leq a_1' + \varepsilon_1^x + \varepsilon^x_c \leq a_2 : \left( b'_2 +  \varepsilon^y_2 + \varepsilon^y_c \geq b_1 \right) \text{or} \left( b_2' + \varepsilon^y_2 + \varepsilon^y_c \leq b_2 \right) \\
    \hspace*{-1ex}&\textbf{if} \: a_1 \leq a_2' + \varepsilon_2^x + \varepsilon^x_c \leq a_2: \left( b'_1 +  \varepsilon^y_1 + \varepsilon^y_c \geq b_1 \right) \text{or} \left( b_1' + \varepsilon^y_1 + \varepsilon^y_c \leq b_2 \right) \\
    \hspace*{-1ex}&\textbf{if} \: a_1 \leq a_2' + \varepsilon_2^x + \varepsilon^x_c \leq a_2 : \left( b'_2 +  \varepsilon^y_2 + \varepsilon^y_c \geq b_1 \right)  \text{or} \left( b_2' + \varepsilon^y_2 + \varepsilon^y_c \leq b_2 \right)
\end{align*}

\noindent
Each statement ensures that a specific blue rectangle vertex is not inside the black rectangle. By Lemma~\ref{lemma:first} in Appendix~\ref{sec:merging}, we can translate these statements into linear inequalities within a mixed integer program. Let \( \mathcal{I}_1 \) denote the set of all such inequalities for all candidate rectangle pairs.  Similarly, we derive corresponding statements for cases 2 and 3, which are also transferable to a linear program. Details are provided in Appendix~\ref{sec:cases}. We denote the resulting inequalities for cases 2 and 3 as \( \mathcal{I}_2 \) and \( \mathcal{I}_3 \), respectively.

\noindent
The above statements ensure that an appropriate $\epsilon$-shift can resolve all inconsistencies in $G_t$. Simultaneously, we aim to minimize map changes, including position shifts and scale adjustments during merging. Given all inequalities in $\mathcal{I}_1, \mathcal{I}_2,$ and $\mathcal{I}_3$, we define the map merging problem as the following optimization problem:

\begin{problem}[Map Merging]\label{prob:map-merging-optimization}
    We aim to find variables $\varepsilon^x \in \Upsilon^x, \varepsilon^y \in \Upsilon^y, \varepsilon_c^x \in \Upsilon_c^x,$ and $\varepsilon_c^y \in \Upsilon_c^y$ to $\epsilon$-shift each of the objects such that:
    \begin{align}\nonumber
        & \text{minimize} \quad \gamma \left(\sum_{\varepsilon^x \in \Upsilon^x} |\varepsilon^x| + \sum_{\varepsilon^y \in \Upsilon^y} |\varepsilon^y| \right) +  \sum_{\varepsilon_c^x \in \Upsilon_c^x} |\varepsilon_c^x| + \sum_{\varepsilon_c^y \in \Upsilon_c^y} |\varepsilon_c^y| \\ \nonumber
       & \text{subject to } \quad \{ \mathcal{I}_1, \mathcal{I}_2,  \mathcal{I}_3 \},
\end{align}
where $\Upsilon^x, \Upsilon_c^x$ (resp. $\Upsilon^y, \Upsilon_c^y$) is the set of all variables that we use in $\mathcal{I}_1, \mathcal{I}_2,$ and $\mathcal{I}_3$ to distort rectangles in the $\mathbf{x}$-axis (resp. $\mathbf{y}$-axis). Here, $\Upsilon_c^x$ and $\Upsilon_c^y$ are the set of all variables that we use in $\mathcal{I}_1, \mathcal{I}_2,$ and $\mathcal{I}_3$ to distort the center of rectangles. 
\end{problem}

\noindent
The main intuition of using parameter $\gamma$ in Problem~\ref{prob:map-merging-optimization} is to emphasize keeping the shape of the object and encourage the model to shift the rectangle. In our experiments, we use $\gamma = 2.1$.

 \noindent
By Lemma~\ref{lemma:third} in Appendix~\ref{sec:merging}, since all inequalities in \(\mathcal{I}_i\) (\(i = 1,2,3\)) are in mixed integer programming form, Problem~\ref{prob:map-merging-optimization} can be solved using Mixed Integer Programming Solvers~\cite{MIPS} to find optimal \(\epsilon\)-shifts. The inequalities in \(\mathcal{I}_1, \mathcal{I}_2,\) and \(\mathcal{I}_3\) ensure no intersections remain, preventing new inconsistencies after merging.

\section{Experiments}
\label{ch: EX}

\noindent
\textbf{Datasets.}
We evaluate our approach using sidewalks and buildings data from two real-world geographic datasets: OpenStreetMap (OSM)\cite{osm} and Boston Open Data (BOD)\cite{boston, bostonmain}. OSM is a high-quality, crowd-sourced geographic database continuously updated by volunteers through surveys, aerial imagery, and freely licensed geodata~\cite{osm}. BOD, developed by the City of Boston's GIS Team, provides publicly available data on sidewalks, crosswalks, and buildings~\cite{boston, bostonmain}. Table~\ref{tab:stat} shows dataset statistics.

\begin{table}
\begin{center}
 \caption{Datasets Statistics}
  \vspace{-2ex}
 \resizebox{\columnwidth}{!} {
\begin{tabular}{ l | c | c | c | c | c }
 \toprule
  {Dataset} & \multicolumn{1}{c}{\begin{tabular}{@{}c@{}}Number of\\ sidewalks (ways)\end{tabular}} & 
  \multicolumn{1}{c}{\begin{tabular}{@{}c@{}}Number of\\ terminal nodes\end{tabular}} & 
  \multicolumn{1}{c}{\begin{tabular}{@{}c@{}}Number of\\ intermediate nodes\end{tabular}} & 
  \multicolumn{1}{c}{\begin{tabular}{@{}c@{}}Number of\\ segments\end{tabular}} & 
  \multicolumn{1}{c}{\begin{tabular}{@{}c@{}}Number of\\ buildings\end{tabular}} \\
 \midrule \midrule
    OSM & 9472  & 10280 & 19432 & 13376 & 9228 \\
    \hline
    BOD & 17936  & 14480 & 100916 & 18407 & 11004 \\
 \bottomrule
\end{tabular}}
\label{tab:stat}
\end{center}
\end{table}

\noindent
\textbf{Baselines.}
We compare our approach with state-of-the-art map conflation algorithms. MAYUR \cite{gorisha} models each source edge as a relation and uses the Rank Join algorithm~\cite{schnaitter2008evaluating} to identify matching road segments, merging unmatched segments via rubbersheeting. Hootenanny~\cite{hoot, canavosio2015hootenanny} automates part of the topographic conflation process, but manual Conflict Resolution remains necessary for full database integration~\cite{canavosio2015hootenanny}. For buildings, we compare \model{} with Hootenanny and a baseline using Jaccard similarity for matching. Since we lack access to Hootenanny’s map matching results, we infer missing and incorrect matches based on duplicate and disoriented geometries. Additionally, we evaluate two merging baselines: rubbersheeting, applied to buildings for merging unmatched non-linear objects, and position merging (Position), where entities from the target GDB are added to the source GDB at their exact locations. In both cases, we use our matching process but replace our merging approach with these baselines.

\noindent
\textbf{Experimental Setup.}
During training, we tune hyperparameters using grid search on a validation set of 100 buildings and 100 segments. We search over: \circled{1} $\beta \in {0.1, 0.2, 0.4, 0.8}$, \circled{2} $\alpha \in {0.005, 0.01, 0.05, 0.1, 0.2}$, and \circled{3} $\lambda \in {0.6, 0.8, 1.0, 1.2, 1.4, 1.6, 1.8, 2}$.

\noindent
We use a learning rate of $0.001$, a hidden dimension of 300, and $L=2$ layers in the Graph Neural Network. For contrastive loss, we sample 10 negative pairs per pre-aligned entity. Activation functions are \texttt{GeLU} for the MLP-Mixer, \texttt{ReLU} for the gate module, \texttt{LeakyReLU} for attention~\cite{GAT}, and \texttt{tanh} for neighborhood aggregation. In map matching, we set $\tau = 0.5$, averaging different similarity measures. For merging, we use $\gamma = 2.1$ to prioritize center shift and preserve entity shapes.

\subsection{Map Matching}
To evaluate performance in the map matching task, we measure the number of correct matches (aligned entity pairs that are correct), incorrect matches (aligned pairs that are incorrect), and missing matches (cases where a correct match exists in the target but is not identified).
The concept of correct matches, incorrect matches, and missing matches correspond to true positives, false positives, and false negatives, respectively. Correct matches (true positives) occur when the matching process accurately identifies a match between the source and target databases. Incorrect matches (false positives) arise when a match is incorrectly identified, while missing matches (false negatives) occur when the process fails to detect an actual match.
we use precision and recall to measure the performance of the model. Precision measures the accuracy of positive matches, while recall measures the completeness of positive matches.

\noindent
As discussed in \LL{Appendix}~\ref{ch:RW}, most recent studies focus on linear objects. In this experiment, we manually compare our approach with the state-of-the-art and a baseline, reporting results in \autoref{tab:combined-matching}. \model{} achieves the highest correct matches and the fewest incorrect and missing matches, outperforming \textsc{MAYUR} and Hootenanny in Precision and Recall. Unlike MAYUR and Hootenanny, which rely only on sidewalks for matching, \model{} leverages both buildings and sidewalks, improving accuracy in dense areas with multiple candidates. Its knowledge graph representation captures both relative and absolute positions, enhancing sidewalk matching. While Hootenanny considers buildings when matching them, it does not use this information for sidewalks.

\noindent
Next, we evaluate \model{} in building matching by comparing it with Hootenanny and Jaccard similarity. \autoref{tab:combined-matching} shows that \model{} outperforms both baselines across all metrics. Its correct match rate is significantly higher, as it learns the best possible matching from data, unlike heuristics-based baselines. Jaccard Similarity relies solely on shape overlap, ignoring features and neighborhood structures. \model{} also achieves the lowest incorrect match rate by leveraging features and structural properties. Jaccard Similarity and Hootenanny have higher missing matches due to positional noise in the databases and Hootenanny’s lack of neighborhood~context.

\noindent
Finally, \autoref{tab:combined-matching} reports map matching results for all entities. Hootenanny is the only competitor matching both buildings and sidewalks, with Jaccard similarity as the baseline. \model{} improves precision by $4\%$ and recall by $5.7\%$, achieving the best performance in overall map matching and each subtask—sidewalk and building matching. More discussion on why \model{} outperforms these methods is provided in Appendix~\ref{subsec:fails}.

\begin{table}
\begin{center}
\caption{Comparative performance of models across sidewalk, building, and map matching}
\resizebox{\columnwidth}{!}{
\begin{tabular}{ l | c | c | c | c | c }
\toprule
{Model} & \multicolumn{1}{|p{2cm}|}{\centering Correct \\ Matches} & \multicolumn{1}{|p{2cm}|}{\centering Incorrect \\ Matches} & \multicolumn{1}{|p{2cm}|}{\centering Missing \\ Matches} & {Precision} & {Recall} \\
\midrule\midrule

\multicolumn{6}{c}{\textbf{Sidewalk Matching}} \\
\midrule
\model{} & 0.986 & 0.012 & 0.002 & 0.988 & 0.997 \\
MAYUR & 0.972 & 0.019 & 0.009 & 0.981 & 0.990 \\
Jaccard & 0.794 & 0.086 & 0.120 & 0.902 & 0.868 \\
Hootenanny & 0.891 & 0.041 & 0.068 & 0.956 & 0.929 \\
\hline\hline

\multicolumn{6}{c}{\textbf{Building Matching}} \\
\midrule
\model{} & 0.898 & 0.039 & 0.063 & 0.958 & 0.934 \\
Jaccard & 0.813 & 0.084 & 0.103 & 0.906 & 0.887 \\
Hootenanny & 0.832 & 0.076 & 0.092 & 0.916 & 0.900 \\
\hline\hline

\multicolumn{6}{c}{\textbf{Map Matching}} \\
\midrule
\model{} & 0.942 & 0.025 & 0.033 & 0.974 & 0.966 \\
Jaccard & 0.803 & 0.084 & 0.113 & 0.905 & 0.876 \\
Hootenanny & 0.861 & 0.058 & 0.081 & 0.936 & 0.914 \\

\bottomrule
\end{tabular}
}
\vspace{-2ex}
\label{tab:combined-matching}
\end{center}
\end{table}

\subsection{Experiment 2: Map Merging}
In map merging, we use two metrics: \circled{1} Cumulative normalized inconsistency and \circled{2} Distortion of the merged objects. Cumulative Normalized Inconsistency (CNI) is used to measure the inconsistency in the map. For two non-linear objects $A$ and $B$ we define their normalized inconsistency as 
$\frac{Area(A) \cap Area(B)}{Area(A) \cup Area(B)}.$ 
As discussed in \autoref{ch:map-merging}, our goal is to obtain a merged map without inconsistencies. We use CNI to assess the inconsistency introduced by different methods. \autoref{tab:inconsis} presents results for \model{} and three baselines. The first row shows the CNI in the merged map. The second row reports the new CNI caused by merging, after removing the initial OSM inconsistency. The third row indicates the percentage of new CNI over the initial CNI in the map. \model{} results in the lowest CNI, maintaining the original inconsistency of the OSM dataset without introducing new inconsistencies, as our approach guarantees no additional inconsistency. The position matching method, which places new entities at their target GDB locations, demonstrates the effectiveness of our mixed-integer programming formulation, removing 7.157 CNI from the merged map. Although Rubbersheeting and Hootenanny reduce some inconsistencies, they cannot eliminate them entirely.

\begin{table}
 \caption{Cumulative normalized inconsistency (CNI) in the conflated database and the new CNI caused by merging for \model{} and baselines.}
  \vspace{-1ex}
 \resizebox{0.8\columnwidth}{!} {
\begin{tabular}{ l | c | c| c| c }
 \toprule 
{}
  & \model{}  &{Position} &  {Rubbersheeting}  &  {Hootenanny}\\
 \midrule \midrule
 {CNI}  &  6.464 & 13.621 & 12.047 & 11.106 \\ 
 \midrule
 {New CNI} & 0 & 7.157 & 5.583 & 4.642 \\
  \midrule
  {New CNI $\%$} & $0\%$  & $110.72\%$ & $86.37\%$ & $71.81\%$ \\
 \bottomrule
\end{tabular}
}
\label{tab:inconsis}   
\end{table}

\begin{table}
\begin{center}
 \caption{Table showing the proportion of displaced segments within a set distance from BOD to the conflated database.}
 \vspace{-1ex}
 \resizebox{0.8\columnwidth}{!} {
\begin{tabular}{ l | c | c| c| c}
 \toprule 
{   Model}
  & \multicolumn{1}{|p{2.2cm}|}{\centering Road
segments\\
within 5 m} &  \multicolumn{1}{|p{2.2cm}|}{\centering Road segments not\\ within 5 m} & \multicolumn{1}{|p{2.2cm}|}{\centering  Road segments within 10 m} & \multicolumn{1}{|p{2.2cm}}{\centering  Road segments not within 10 m}  \\
 \midrule \midrule
 \model{}  &  99.01\%  & 0.003 \%  &  99.91\%  &  0.001\%  \\
 \hline 
 MAYUR \: &  97.90\%  &  0.011\% &  99.12 \% &  0.004\% \\
 \bottomrule
\end{tabular}
}
\label{tab:mergesegments}
\end{center}
\end{table}

\noindent
To compare \model{} with the state-of-the-art road merging method MAYUR~\cite{gorisha}, we use their metric: the percentage of segments whose Hausdorff distance to the corresponding merged segment is within $\eta$-meters, measuring perturbation after merging. As shown in \autoref{tab:mergesegments}, \model{} outperforms MAYUR, achieving lower perturbation by minimizing changes for all entities through its mixed-integer programming formulation, while MAYUR does not optimize for this.Overall, \model{} achieves the lowest CNI (inconsistency) while preserving map integrity with minimal modifications.

\begin{table}
\begin{center}
 \caption{Number of segments and buildings in the merged map by \model{} vs. the source and target geospatial databases.}
  \vspace{-1ex} 
  \resizebox{0.7\columnwidth}{!} {

\begin{tabular}{ l | c | c | c}
 \toprule
  {Dataset} & {$OSM$} &  {$BOD$} & {Merged Map} \\
 \midrule \midrule
    Number of segments & 13376  &   18407  &    21618 \\
    \midrule
    Number of buildings &  9228 &      11004   & 13219\\
 \bottomrule
\end{tabular}}
\label{tab:merged-stat}
\end{center}
\end{table}

Finally, in \autoref{tab:merged-stat}, we report the number of segments and buildings in the merged map by \model{}. The table shows that OSM lacks 8,242 segments and 3,991 buildings, which \model{} merges from BOD data to improve coverage.

\section{Conclusions and Future Work}

Geospatial databases (GDBs) are widely used in navigation, ride-sharing, and logistics, but often lack regional coverage or contain missing entities. Map conflation combines two GDBs by adding missing features from the target while keeping the source unchanged. Existing methods face two main challenges: (1) they focus on linear objects like roads and struggle with non-linear features; and (2) they rely on heuristics instead of learning to match entities in a data-driven way. To address this, we propose \model{}, a machine-learning approach with three steps: (1) knowledge graph construction to model spatial relationships; (2) map matching using graph alignment and geospatial encoding; and (3) map merging via mixed-integer linear programming to integrate unmatched features while preserving consistency. Experimental results show that \model{} outperforms the state-of-the-art and baseline methods in map conflation tasks. Future work includes incorporating adaptive grid and buffer sizes based on local map density and object geometry, as well as supporting consistent matching of entities with differing geometric representations across GDBs.

\section*{GenAI Usage Disclosure}
No generative AI was used in the preparation of this work.

\bibliographystyle{ACM-Reference-Format}
\bibliography{Main}
\appendix
\clearpage
\section{Related Work}
\label{ch:RW}

In this section, to situate our research in a broader context, we review related work in four categories: \circled{1} Map matching, \circled{2} Map merging, \circled{3} Graph representation learning for digital maps, and \circled{4} Representation learning on knowledge graphs.

\subsection{Map Matching}\label{sec:rw-map-matching}
Given two geospatial databases, source and target GDB, the problem of map matching aim to match entities in source and target GDB such that match entities correspond to the same physical object on Earth. With the increasing demand for digital maps and the need for improving their coverage and accuracy, the problem of map matching attracts much attention during the past decade~\cite{walter1999matching, li2010optimized, tong2014linear, li2011optimisation, tong2014linear}. The main idea is to assign a similarity score two each pair of entities and match entities with a high similarity score. However, studies are different in how they define similarity metrics and how they find the solution. In this section, we review existing works in two groups: (1) Greedy methods, which greedily match similar objects in the two GDBs. (2) Optimization methods, which formulate the problem as an optimization task and find the optimal solution.  

\subsection{Greedy Heuristic Methods}
The first group of work develops greedy algorithms to find the matches of the spatial objects between two geospatial databases. \emph{Delimited Stroke Oriented} (DSO) algorithm, which is designed by \citet{zhang2008delimited}, aims to match roads (linear objects) in road networks. \emph{DSO} uses Hausdorff distance, angle, orientation, and some other factors to measure the similarity of roads and iteratively matches the most similar roads. This method requires so many pre-defined thresholds (one threshold per similarity factor) to measure the similarity of two roads, which makes it infeasible for real-world scenarios. A similar approach is designed by \citet{zhang2007iterative}, which uses several thresholds to find the potential candidate matches in road networks. \citet{mustiere2008matching} design \emph{NetMatcher}, a map matching framework that matches points and lines between two road networks. \emph{NetMatcher} first finds all the potential candidates for the nodes and arcs in the geospatial databases (road networks), and then filters them based on the connectivity of nodes. This work assumes that one GDB is a subset of the other one, which is an unrealistic setting. \citet{schafers2014simmatching} develop \emph{SimMatching} that builds a similarity matrix for the pairs of potential candidate matches between two given geospatial databases. In every iteration, an object in one database is matched to its candidate match if the topologically connected roads in one database are also connected in the other database.

Similar to the above methods, several greedy procedures are designed for map matching in road networks~\cite{tong2009probability, mustiere2008matching, song2011relaxation, liu2015progressive}. While the overview of these methods is the same, they are different in \circled{1} the definition of the similarity score (e.g., similarity based on shape, angle, orientation and position~\cite{mustiere2008matching, almotairi2018using} or topological connectivity~\cite{mustiere2008matching, tong2009probability}), and \circled{2} the matching level (e.g., point-to-point matching~\cite{song2011relaxation}, line-to-line matching~\cite{liu2015progressive}). More recently, \citet{new-rw-ali} developed an automated conflation pipeline to align state DOT Linear Referencing System (LRS) basemaps with OSM. Their approach leverages Valhalla’s Hidden Markov Model and Viterbi search for probabilistic route matching. While scalable for roadway networks, it is designed specifically for linear LRS-style routes and does not extend to non-linear entities such as buildings.
However, each of these approaches suffers from a subset of the following limitations: \circled{i} Limited to a specific type of object and cannot generalize to all entities in the map. \circled{ii} Assume unrealistic settings that limit their applicability (e.g., one GDB is a subset of the other one). \circled{iii} Assume pre-defined patterns or rules for the matching of entities (similarity of angle or shape) and cannot learn the match entities from the data, limiting their generalizability in different scenarios. \circled{iv} They are heuristic approaches and find suboptimal solutions.

\subsection{Optimization Methods}
Contrary to the first group, which finds a heuristic solution, the second group formulates the problem as an optimization task and finds the optimal solution. \citet{walter1999matching} model the map matching problem as a communication channel, where source GDB (resp. target GDB) is the sender (resp. receiver). The authors suggest using the Buffer Growing method to find potential matches in road networks and then maximize the \emph{mutual information} between the matched pairs of roads. Their calculation of mutual information heavily relies on a manual statistical analysis to compute the conditional probabilities, which limits the applicability of the method.

Some work~\cite{li2010optimized, li2011optimisation} formulated map matching as an assignment problem. To this end, for each linear object in the source GDB, they find all the potential matches in the other GDB using Hausdorff distance and string similarity (modified Hamming distance) between the names of the linear objects. Next, they aim to minimize the dissimilarity between the matched pairs of objects. Similarly, \citet{tong2014linear} design \emph{OILRM}, an optimization and iterative logistic regression method, that employs an optimization component to find the matching pairs such that the total modified Hausdorff distance between the matched pairs is minimized. Next, to improve the accuracy and recall of this method, it iteratively uses logistic regression. All these methods~\cite{li2010optimized, li2011optimisation, tong2014linear} do not take advantage of the topological connectivity of the objects in the road networks and assume that some metainformation is available.

\citet{lei2019optimal} formulated the matching between the roads as a $p$-matching problem in graphs. In this formulation, each road is considered a node in the graph and edges are added from a node in one GDB to a node in the other GDB if the features corresponding to the nodes are less than a certain Hausdorff distance apart, and their Hausdorff distance is considered as the cost of the edge. Now they model the matching problem as a flow in this network such that the total flow in the network does not exceed $p$. This formulation requires solving a flow problem, which is time-consuming, making it infeasible for large-scale geospatial databases. 

Finally, recently, \citet{gorisha} design a framework, \emph{MAYUR}, based on the Rank Join algorithm to match linear objects in two geospatial databases. \emph{MAYUR} treats each edge of the source database as a relation and leverages the Pull Bound Rank Join algorithm ~\cite{schnaitter2008evaluating}, which is a commonly used algorithm in relational databases, to join these edges and find the top-1 join result in order to identify matching pairs. 

Each of these approaches suffers from a subset of the following limitations: \circled{i} Limited to a specific type of object and cannot generalize to all entities in the map. \circled{ii} Assume pre-defined patterns or rules for the matching of entities (similarity of angle or shape) and cannot learn the match entities from the data, limiting their generalizability in different scenarios. \circled{iii} Solving an optimization is a time-consuming task in most cases, which limits their applicability for large-scale GDBs.

\subsection{Map Merging}\label{sec:rw-map-merging}
Map conflation involves transforming features after map matching. The map-matching results are utilized to identify the unmatched entities in one database, denoted as $G_t$, that need to be merged with the spatial entities in the other database, denoted as $G_s$. However, a simple union of the unmatched entities in $G_t$ with the entities in $G_s$ does not guarantee the desired consistency and connectivity in the conflated map.
Therefore, the unmatched entities must undergo appropriate transformations to ensure consistency and proper connectivity with the existing linear entities in the geospatial database. Numerous studies have been conducted to explore the best way to integrate linear entities, such as road networks, and achieve this objective.\\
Rubbersheeting \cite{haunert2005link, gorisha, chen2006automatically, song2008automated, zhang2016automatic, katzil2005spatial} is a widely used and popular technique that has been extensively adopted in various works. It has been considered the state-of-the-art approach in map merging. Rubbersheeting involves distorting the flexible membrane of one map to align with the other map, using the "control points", corresponding points that are matched between the two geospatial databases after map matching,  as the basis for adjustment. The unmatched features undergo deformation based on these distortions between control points and are integrated into the other map.  The displacement between the control points is a shift vector that represents the vector from one point in one geospatial database to its corresponding point in the other 
geospatial database. These approaches vary in terms of the functions they employ for interpolation and obtaining the shift vectors to find the position of unmatched points. \citet{haunert2005link} employs a rubbersheeting technique to merge the databases, assuming that the links between the corresponding spatial features in the databases are provided as input. \citet{zhang2016automatic} introduce a simple map merging method that utilizes displacement vectors obtained from the control points after road matching using the Delimited-Stroke
Oriented (DSO) algorithm as described in ~\cite{zhang2008delimited}. Another type of work uses a set of some predefined constraints that preserve the geometry of the objects, and use them after merging to minimize the Euclidean distance between the coordinates of the points in the source and conflated database~\cite{harrie1999constraint, touya2013conflation}. One of the main drawbacks of these methods is that they cannot guarantee to remove inconsistencies or even not make any inconsistencies. Finally, very recently, \citet{gorisha} utilize a weighted average of shift vectors for matched terminal points, where the weight assigned to each vector is inversely proportional to the distance between the spatial points.

None of these methods can guarantee that they can remove inconsistencies or not make inconsistencies. Accordingly, after merging, several objects might overlap, which makes the merged map undesirable. 

\subsection{Graph Representation Learning for Digital Maps}\label{sec:rw-gml-maps}
With the recent advancement of machine learning on graphs, several studies model the connectivity of roads as graphs~\cite{ETA, gorisha, thesis, ETAsigmodHistorical, ETAGAN, ETAtrajectory, ETAstochasticRecurrent}. Learning the representation of roads on the map results in state-of-the-art performance in many downstream tasks. Travel-time prediction is a problem that has been closely studied in various settings. Earlier works modify convolutional neural networks to be mindful of the spatial properties of the trajectory~\cite{ETAtrajectory} and employ graph neural networks coupled with recurrent mechanisms~\cite{ETAstochasticRecurrent}. Recently, the CurbGAN framework~\cite{ETAGAN} leverages generative adversarial networks for evaluating urban development plans in the context of estimated travel times. 

All existing machine learning models on maps focus on a specific type of object (e.g., road networks), and are unable to encode all the entities in the map. This formulation can cause suboptimal performance since objects in the map are closely related to each other and encoding non-linear objects (resp. linear objects) without considering linear objects (resp. non-linear objects) can cause inconsistency and/or suboptimal performance. In sections~\ref{ch:KG} and \ref{ch:map-matching}, we present an approach to represent digital maps as knowledge graphs and then a graph machine learning model that can learn feature representation of \emph{all} entities in the map.

\subsection{Representation Learning on Knowledge Graphs}\label{sec:rw-kg}
A knowledge graph is a structured representation of knowledge that captures relationships between entities and their attributes. It organizes information in a graph format, where nodes represent entities and edges represent the connections or relationships between these entities, enabling efficient data retrieval and semantic understanding. With the recent advancement of machine learning on graphs, several methods have been developed to learn the structure of knowledge graphs~\cite{zhang2020learning, rossi2021knowledge, kazemi2018simple, steenwinckel2022ink, zhang2019heterogeneous}. However,  most of these methods are designed for learning the node embeddings of a single knowledge graph for a specific task (e.g., entity classification, edge prediction, etc.). All these methods are different from our knowledge graph representation learning method in section~\ref{ch:map-matching}, as we encode the entities of two knowledge graphs of interest in the same embedding space, enabling the model to align (match) entities between the two knowledge graphs. 

Similar to our work, some methods discuss the problem of knowledge graph alignment~\cite{KG-benchmark}, which encodes the entities of two given knowledge graphs in the same embedding space. Knowledge graph alignment aims to match the entities of two knowledge graphs such that matched entities refer to the same real-world identity~\cite{KG-benchmark}. The generic framework of knowledge graph alignment methods is to use an \emph{Embedding Module}, which aims to learn low-dimensional vector representations of entities in both knowledge graphs and then uses an \emph{Alignment Module}, which aims to unify the embeddings of the two KGs into the same vector space so that aligned entities can be identified. Different methods are different in each of these two modules. Some methods use Graph Convolutional Network (GCN)~\cite{wang2018cross} as \emph{Embedding Module}, while others use Graph Attention Networks~\cite{mao2020mraea, liu2020exploring} or Transformers~\cite{transformer-KG}. However, all these methods are unable to capture multi-hop neighborhoods of entities, which play important roles in map matching (see section~\ref{ch:map-matching}). Also, they only use provided entity features as the input of the first layer of the graph neural network, limiting their ability to learn and emphasize features when they are important.

The closest method to our knowledge graph alignment, introduced in section~\ref{ch:map-matching}, is \emph{AliNet} \cite{alinet}, an end-to-end knowledge graph alignment models that uses distant neighbors to expand the overlap between their neighborhood structures and employs an attention mechanism to highlight helpful distant neighbors and reduce noises. While our knowledge graph encoder is inspired by the design of AliNet's architecture, most of its components are different. AliNet employs a vanilla GCN~\cite{GCN} to encode 1-hop neighborhoods, which potentially suffer from over-smoothing and over-fitting. Also, the vanilla GCN cannot learn the importance of the previous hidden state in the updating process of node encodings. Finally, the vanilla GCN treats all messages from different nodes the same, which causes inconsistencies in the scale of messages (see section~\ref{ch:map-matching}). We address the potential of overfitting by using dropuot~\cite{dropout} and use self-feature modeling to learn the importance of the previous hidden state. Finally, we normalize all the weights in each message-passing process, which makes the scale of all messages the same, avoiding inconsistencies. On the other hand, \emph{AliNet} uses features only in the first layer of the message-passing process, missing their importance in deeper layers. However, digital maps often are associated with rich meta-information (e.g., location, tag, object type, etc.) about entities. To this end, we use an MLP-Mixer~\cite{mlp-mixer} module to learn the features of entities and capture both cross-sample and cross-feature dependencies of objects in maps. 

\section{Example of Linear Objects}
\begin{example}
In Figure~\ref{fig:example1}, the nodes $p_2$, $p_3$, $p_4$, and $p_5$ have degrees of 1, 3, 1, and 1, respectively. Therefore, these nodes are considered terminal points. Although $p_1$ has a degree of 2, the angle between $p_1n_1$ and $p_1p_2$ is significant, making it a terminal point as well. In this figure, there are four segments: $(p_1,p_2)$, $(p_1,n_1,n_2,n_3,p_3)$, $(p_3,p_4)$, and $(p_3,p_5)$. The circular error $\delta$ for the terminal point $p_4$ is depicted by a dashed purple circle.
\end{example}

\begin{figure}
    \centering
\includegraphics[width=0.8\linewidth]{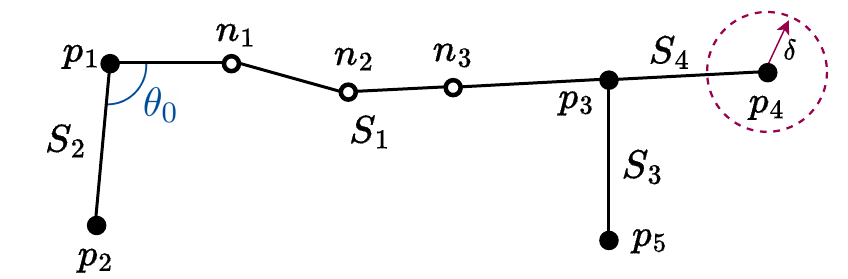}
    \caption{ An example of a linear entity within a geospatial database.}
    \label{fig:example1}
\end{figure}
\label{sec:example}
\section{Knowledge Graph Construction}\label{sec:knconstappendix}
Figure \ref{fig:buff_grid} (a) illustrates a $3\times3$ grid centered at entity $a$ and shows the relative positional relations with respect to $a$.  Figure \ref{fig:buff_grid} (b) shows the buffer around segment \( S_1 = (p_1, p_2) \).

\begin{figure}
\centering
{{\includegraphics[width=0.3\textwidth]{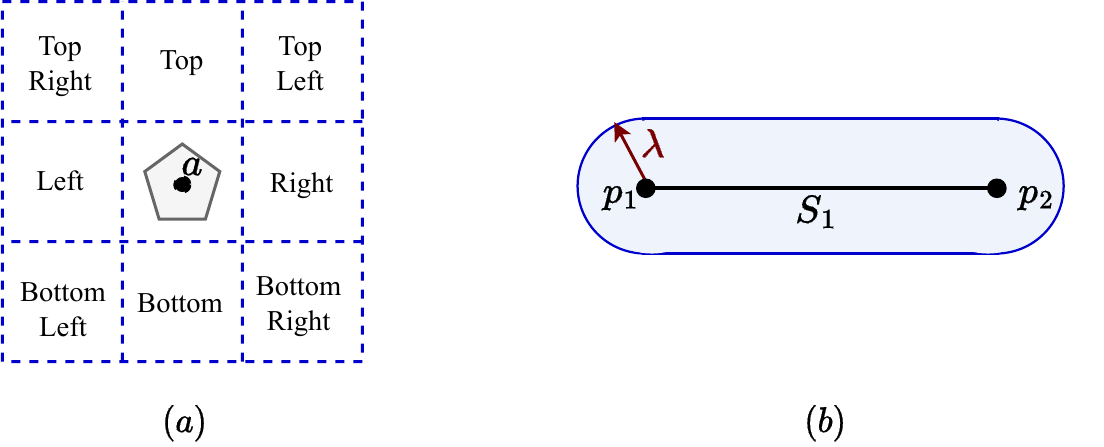} }}

    \caption{(a) An illustration of a 3 × 3 grid centered at entity $a$'s center. Each square in the grid shows the relations with respect to  polygon $a$. We have nine relation types: 1) “Bottom”, 2) “Bottom-Right”, 3) “Right”, 4) “Top-Right”, 5) “Top”, 6) “Top-Left”, 7) “Left”, 8) “Bottom-Left” and 9) “close”. (b) An illustration of the buffer around linear segment $S1 = (p_1, p_2)$, with a width equal to $\lambda$. For segments, we define two relation types, “Inside” if a non-linear entity is inside the buffer of segment $S_1=(p_1, p_2)$, and Connected, if a linear entity has endpoints within the distance $\delta$ of segment $S_1$.}
    \label{fig:buff_grid}
\end{figure}

\section{Map Matching}
\label{sec:matchingappend}
\begin{figure}
\centering
{{\includegraphics[width=\columnwidth]{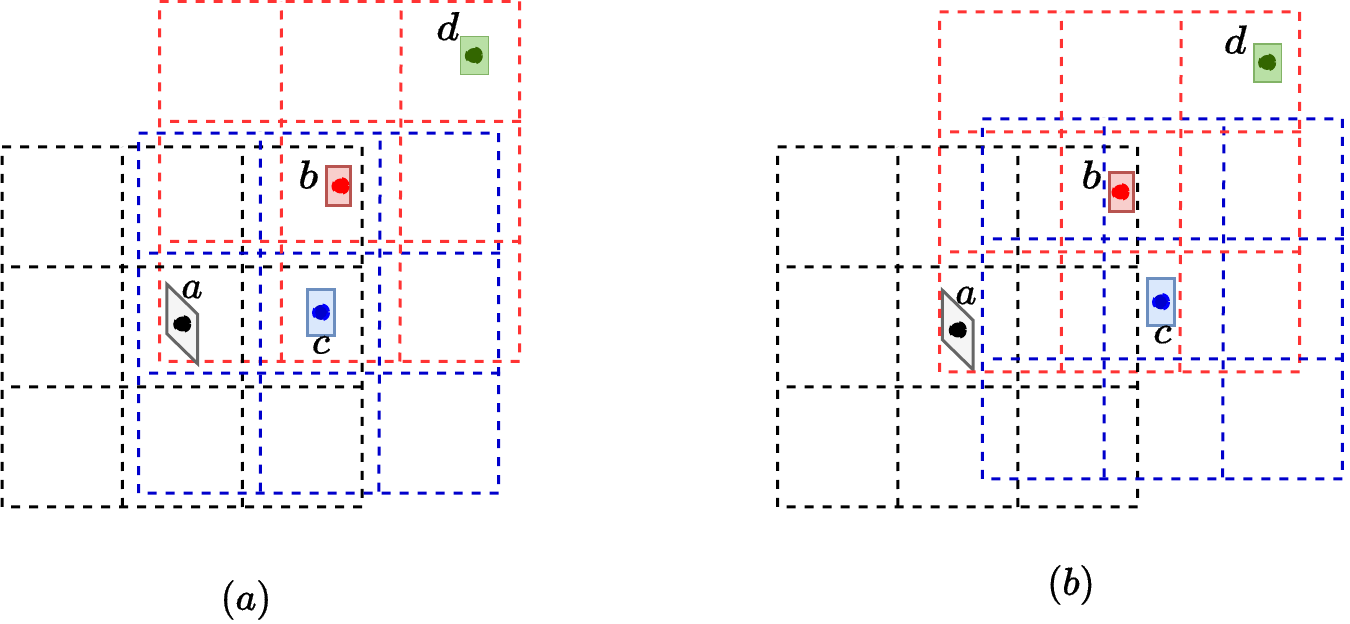} }}
    \caption{Higher-order neighborhoods provide information to mitigate the noise in the location of entities. (a) Source GDB (b) Target GDB.}
    \label{fig:2hop}
\end{figure}

\begin{example}\label{ex:2hop}
    In Figure~\ref{fig:2hop}(a) (resp. Figure~\ref{fig:2hop}(b)), we let $\GG_s = (\E_s, \R_s, \T_s, \F_s)$ (resp. $\GG_t = (\E_t, \R_t, \T_t, \F_t)$) denote the corresponding knowledge graph of the source GDB (resp. target GDB), obtained from Algorithm~\ref{alg:KG-construction}. In $\GG_s$, $\mathbf{c}$ is connected to $\mathbf{a}$ via a connection with type ``Right'' as in the source GDB, $\mathbf{c}$ is inside of the grid around $\mathbf{a}$. However, in $\GG_t$, due to a small noise in the location of $\mathbf{c}$ in the target GDB, $\mathbf{c}$ is outside of the $\mathbf{a}$'s grid. Consequently, in $\GG_t$, $\mathbf{a}$ and $\mathbf{c}$ are not connected. Although this structural difference in the constructed knowledge graphs might seem a challenge in the matching process, considering the higher-order neighborhood of each entity can mitigate this challenge. That is, although in $\GG_t$ we cannot directly conclude that $\mathbf{a}$ and $\mathbf{c}$ are neighbors and $c$ is on the right side of $\mathbf{a}$, they still are in the 2-hop neighborhood of each other and so are close. Moreover, based on the relations of  $\GG_t$, we can conclude that $\mathbf{c}$ is on the right side of $\mathbf{a}$, as in the knowledge graph we have $\mathbf{c} \overset{\text{``Bottom''}}{\rightarrow} \mathbf{b}$ and $\mathbf{b} \overset{\text{``Top-Right''}}{\rightarrow} \mathbf{a}\:$. Accordingly, looking at the 2-hop neighborhood of $\mathbf{a}$ and $\mathbf{c}$ in $\GG_t$ can provide us with complementary information that helps in the matching process of knowledge graphs $\GG_s$ and $\GG_t$.
\end{example}

\begin{example}
    In Figure~\ref{fig:2hop}(b), while both $\mathbf{c}$ and $\mathbf{d}$ are in the 2-hop neighborhood of $\mathbf{a}$, the contribution of $\mathbf{c}$ in $\mathbf{a}$'s neighborhood encoding is more important than $\mathbf{d}$'s, as $\mathbf{d}$ is far away from $\mathbf{a}$. Note that, while the actual distances are not encoded by the knowledge graph directly, relative distances can be inferred from the relation types. 
\end{example}

\begin{example}
    In Figure~\ref{fig:2hop}(a) (resp. Figure~\ref{fig:2hop}(b)), we let $\GG_s = (\E_s, \R_s, \T_s, \F_s)$ (resp. $\GG_t = (\E_t, \R_t, \T_t, \F_t)$) to denote the corresponding knowledge graph of the source GDB (resp. target GDB), obtained from Algorithm~\ref{alg:KG-construction}. We also let $\mathbf{h}^s_e$  (resp. $\mathbf{h}^t_{e'}$) represent the encoding of entity $e \in \GG_s$  (resp. $e' \in \GG_t$) obtained from Equation~\ref{eq:final-enc}. As discussed in Example~\ref{ex:2hop}, $\mathbf{c}$ is not in the 1-hop neighborhood of $\mathbf{a}$ in $\GG_t$ while in $\GG_s$, we have $\mathbf{c} \overset{\text{``Right''}}{\rightarrow} \mathbf{a}\:$. However, from the connection of $\mathbf{b}$ to $\mathbf{a}$ and $\mathbf{c}$ in $\GG_t$, we can conclude that $\mathbf{c}$ is on the right side of and close to $\mathbf{a}$. Based on the translational assumption, we expect that $\mathbf{h}^t_{\mathbf{c}} - \mathbf{h}^t_{\mathbf{a}}$ represents the relation type ``Right''. Accordingly, we expect the model to encode entities such that $|| \mathbf{h}^t_{\mathbf{c}} - \mathbf{h}^t_{\mathbf{a}} || \approx || \mathbf{h}^s_{\mathbf{c}} - \mathbf{h}^s_{\mathbf{a}} || $, which also helps the matching process.
\end{example}
\subsection{Matching Process}
\label{sec:maatchalgappendix}
 To solve the above weighted matching problem we convert it to the problem of finding Minimum Weight Matching in Bipartite Graphs~\cite{matching}. To this end, we construct a complete weighted bipartite graph $\mathcal{B} = (V_1, V_2, V_1 \times V_2, W)$ such that $V_1 = \E_s$, $V_2 = \E_t$ and $W[e, e'] = 1 - \textsc{Sim}(e, e')$. Now, we use the algorithm presented by \citet{matching} and find the minimum weighted matching in the bipartite graph $\mathcal{B}$, $\Tilde{\mathbf{M}}$. We next show that $\Tilde{\mathbf{M}}$ is the best matching for Equation~\ref{eq:matching}.
\begin{prop}\label{prop:matching}
    Given the minimum weighted matching in the bipartite graph $\mathcal{B}$, $\Tilde{\mathbf{M}}$, we have:
    \begin{equation}
        \Tilde{\mathbf{M}} = \underset{\hat{\mathbf{M}} \in \mathbb{M}}{\arg \max} \sum_{(e, e') \in \hat{\mathbf{M}}}\textsc{Sim}(e, e').
    \end{equation}
\end{prop}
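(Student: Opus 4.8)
The plan is to show that the bipartite minimum-weight matching $\Tilde{\mathbf{M}}$ and the maximum-similarity matching coincide by exploiting the linear relationship $W[e,e'] = 1 - \textsc{Sim}(e,e')$ between edge weights and similarities, under the assumption that we are comparing matchings of a \emph{fixed common size}. First I would set up notation: let $n = \max\{|\E_s|, |\E_t|\}$, and observe that after the standard reduction to a \emph{complete} balanced bipartite graph (padding the smaller side with dummy vertices so that $|V_1| = |V_2| = n$, with dummy edges assigned appropriate weights), every perfect matching in $\mathcal{B}$ has exactly $n$ edges. This is the crucial structural fact: because all matchings under comparison have the same cardinality, the constant $1$ per edge contributes a fixed additive term.

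The key algebraic step is then immediate. For any perfect matching $\hat{\mathbf{M}}$ in $\mathcal{B}$,
\begin{equation}
    \sum_{(e,e') \in \hat{\mathbf{M}}} W[e,e'] = \sum_{(e,e') \in \hat{\mathbf{M}}} \bigl( 1 - \textsc{Sim}(e,e') \bigr) = n - \sum_{(e,e') \in \hat{\mathbf{M}}} \textsc{Sim}(e,e').
\end{equation}
Since $n$ is a constant independent of the choice of $\hat{\mathbf{M}}$, minimizing the total weight $\sum W[e,e']$ is equivalent to maximizing the total similarity $\sum \textsc{Sim}(e,e')$. Formally, I would write $\arg\min_{\hat{\mathbf{M}}} \sum W[e,e'] = \arg\min_{\hat{\mathbf{M}}} \bigl( n - \sum \textsc{Sim} \bigr) = \arg\max_{\hat{\mathbf{M}}} \sum \textsc{Sim}$, which gives exactly the claimed identity $\Tilde{\mathbf{M}} = \arg\max_{\hat{\mathbf{M}} \in \mathbb{M}} \sum_{(e,e') \in \hat{\mathbf{M}}} \textsc{Sim}(e,e')$. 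The correctness of the algorithm of \cite{matching} in actually returning the minimizer $\Tilde{\mathbf{M}}$ is assumed as a cited result.

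The main obstacle — and the point that deserves careful handling rather than the trivial algebra above — is the \emph{cardinality-matching} gap between the two formulations. The maximization in Equation~\ref{eq:matching} ranges over the set $\mathbb{M}$ of \emph{all} possible matchings, which in general includes partial matchings of differing sizes, whereas the minimum-weight bipartite matching algorithm operates on perfect matchings in the padded complete graph. If matchings of different sizes were genuinely allowed to compete, the additive constant would not be uniform and the two optima could diverge. I would therefore need to make precise (i) that dummy vertices and dummy edges are introduced so that every candidate corresponds to a perfect matching of size $n$, and (ii) that the dummy-edge weights are chosen (e.g. via the threshold $\mathscr{S}$ or a zero-similarity convention) so that the induced correspondence between perfect matchings in $\mathcal{B}$ and admissible matchings in $\mathbb{M}$ is similarity-preserving. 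Once this reduction is pinned down, the proof closes by the one-line constant-shift argument; establishing the reduction cleanly is where the real care lies.
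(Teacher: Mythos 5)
Your proposal is correct and rests on the same core observation as the paper's proof: since $W[e,e'] = 1 - \textsc{Sim}(e,e')$ and all matchings under comparison have a fixed cardinality, the total weight equals a constant minus the total similarity, so minimizing one maximizes the other (the paper phrases this as a contradiction argument with the constant $\min\{|\E_s|,|\E_t|\}$ rather than your direct constant-shift with padding to $\max\{|\E_s|,|\E_t|\}$, but the algebra is identical). Your explicit flagging of the cardinality gap --- that $\mathbb{M}$ nominally contains partial matchings of differing sizes, so the additive constant is only uniform once the reduction fixes the matching size --- is a point the paper's proof silently assumes, and your treatment is the more careful of the two.
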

\begin{proof}
    We show the above proposition by contradiction. Let $\mathbf{M} \neq \Tilde{\mathbf{M}}$ be the weighted matching with maximum cumulative similarity, i.e.: 
    \begin{equation*}
        \mathbf{M} = \underset{\hat{\mathbf{M}} \in \mathbb{M}}{\arg \max} \sum_{(e, e') \in \hat{\mathbf{M}}}\textsc{Sim}(e, e').
    \end{equation*} 
    Therefore, it is also a matching for bipartite graph $\mathcal{B}$. Since $\Tilde{\mathbf{M}}$ has the minimum weights among all possible matching in the bipartite graph $\mathcal{B}$, we have:
    \begin{align}\nonumber
        &\sum_{(e, e') \in \Tilde{\mathbf{M}}} \left(1 - \textsc{Sim}(e, e')\right) \leq \sum_{(e, e') \in \mathbf{M}} \left(1 - \textsc{Sim}(e, e')\right)\\ \nonumber
        \Rightarrow &\min\{|\E_s|, |\E_t|\} -\!\!\!\!\!\! \sum_{(e, e') \in \Tilde{\mathbf{M}}} \!\!\!\textsc{Sim}(e, e') \leq \min\{|\E_s|, |\E_t|\} - \!\!\! \sum_{(e, e') \in \mathbf{M}}\!\! \!\!\!\textsc{Sim}(e, e')\\
         \Rightarrow &\sum_{(e, e') \in \mathbf{M}} \textsc{Sim}(e, e') \leq \sum_{(e, e') \in \Tilde{\mathbf{M}}} \textsc{Sim}(e, e'),
    \end{align}
    which is a contradiction, since $\mathbf{M}$ has the maximum cumulative similarity among all possible matchings. 
\end{proof}
Based on Proposition~\ref{prop:matching}, the matching found by \citet{matching}, which models the matching problem as a max flow problem in the graphs, in a bipartite graph $\mathcal{B}$, is the solution to our maximum weighted matching problem. Constructing the weighted bipartite graph requires $\mathcal{O}(|\E_s| \times |\E_t|)$ time, and finding the minimum weighted matching in the bipartite graph requires $\mathcal{O}(\max\left\{ |\E_s|, |\E_t| \right\}^3)$. Accordingly, the time complexity of map matching is $\mathcal{O}(\max\left\{ |\E_s|, |\E_t| \right\}^3)$.

\section{Algorithms}
\label{sec:appendix_algorithms}
Algorithm~\ref{alg:KG-construction} describes this process and constructs a knowledge graph for a geospatial database given sets of non-linear entities (polygons) $E$, a set of linear segments $SE$, circular error $\delta$, buffer width $\lambda$ for linear segments, and grid width $\mu$ for non-linear entities. While the above procedure is simple, its naive implementation can cause $\mathcal{O}(|E \cup SE |^2)$ operations, which is time-consuming for large GDBs. That is, for each entity $e \in E \cup SE $, checking all other entities to see whether they are neighbors with $e$ requires $\mathcal{O}(|E \cup SE |^2)$ operations. To this end, we design an efficient algorithm, which is illustrated in Algorithm~\ref{alg:KG-construction}. Algorithm~\ref{alg:KG-construction} first uses R-Tree data structure~\cite{R-tree} to index entities in the GDB, as indicated in line 1 and line 2. This indexing can reduce the average search time to $\mathcal{O}(\log |E \cup SE |)$.
Next, for each entity $e$, a grid with width $\mu$ centered at $e$ is created. Using the constructed indices, we retrieve all entities within the grid,  as indicated in line 4 and line 5 of Algorithm ~\ref{alg:KG-construction}. Note that given the spatial coordinates of entity $e$ as $(e_x, e_y)$, the spatial coordinates  of its corresponding grid can be represented by the bounding box $(x_{min}, y_{min}, x_{max}, y_{max}) = (e_x - \mu/2, e_y - \mu/2, e_x + \mu/2, e_y + \mu/2)$. For each entity $u$ within the grid, its placement  within the grid is determined by its relative position to the location of $e$, as shown in lines 6-27 of Algorithm ~\ref{alg:KG-construction}.  Based on this,  we construct the triple $(v_e, r, u)$, where $r$ denotes their relative relation.

Next, we construct the entities corresponding to linear segments.  Firstly, for each segment $s$, a buffer of width $\lambda$ is created around the segment $s$, as shown in line 29 of Algorithm ~\ref{alg:KG-construction}. Subsequently, utilizing the constructed R-Tree indices, all entities (resp. segments) that fall within (resp. intersect with) the buffer of segment $s$ are retrieved and stored in $\mathcal{L}_s$ (resp. $\mathcal{CS}_s$), as indicated in lines 30 and 31 of the algorithm ~\ref{alg:KG-construction}. For every entity $u$ in $\mathcal{L}_s$, the triple $(u, "Inside", v_s)$ is constructed and added to the knowledge graph, as demonstrated in lines 32 and 33. Additionally, for each segment $s_0$ in $\mathcal{CS}_s$, if it has an endpoint within a distance $\delta$ of segment $s$, the triple $(v_s, "Connected", s_0)$ is formed and added to the knowledge graph, as shown in lines 34-36 of Algorithm  ~\ref{alg:KG-construction}.

\begin{algorithm}[t]
    \small
    \caption{Knowledge Graph Construction}
    \label{alg:KG-construction}
    \begin{algorithmic}[1]
        \Require{$E, SE, \delta, \lambda, \mu$}
        \Ensure{Knowledge Graph $\GG$}
        \State $\mathcal{T} \gets \textsl{BuildRTree}(E)$
        \State $\mathcal{S} \gets \textsl{BuildRTree}(SE)$
         \For{$e \in E$}
                 \State J = Local grid with width $\mu$ centered at $e$
         \State $\mathcal{N}_e \gets \textsl{GetNodesInsideGrid(J)} $
        \For{$u \in \mathcal{N}_e$}
        \If{$e_x+\frac{\mu}{6} <u_x <e_x+\frac{\mu}{2}$}
            \If{$e_y+\frac{\mu}{6} <u_y <e_y+\frac{\mu}{2}$}
            \State connect $v_e$ to $u$ with a relation type of "Top-Right"
            \ElsIf{$e_y-\frac{\mu}{6} <u_y <e_y+\frac{\mu}{6}$}
            \State connect $v_e$ to $u$ with a relation type of "Right"
            \ElsIf{$e_y-\frac{\mu}{2} <u_y <e_y-\frac{\mu}{6}$}
            \State connect $v_e$ to $u$ with a relation type of "Bottom-Right"
            \EndIf
          \ElsIf{$e_x-\frac{\mu}{6} <u_x <e_x+\frac{\mu}{6}$}
            \If{$e_y+\frac{\mu}{6} <u_y <e_y+\frac{\mu}{2}$}
            \State connect $v_e$ to $u$ with a relation type of "Top"
            \ElsIf{$e_y-\frac{\mu}{6} <u_y <e_y+\frac{\mu}{6}$}
            \State connect $v_e$ to $u$ with a relation type of "Close"
            \ElsIf{$e_y-\frac{\mu}{2} <u_y <e_y-\frac{\mu}{6}$}
            \State connect $v_e$ to $u$ with a relation type of "Bottom"
            \EndIf
            
            \ElsIf{$e_x-\frac{\mu}{2} <u_x <e_x-\frac{\mu}{6}$}
            \If{$e_y+\frac{\mu}{6} <u_y <e_y+\frac{\mu}{2}$}
            \State connect $v_e$ to $u$ with a relation type of "Top-Left"
            \ElsIf{$e_y-\frac{\mu}{6} <u_y <e_y+\frac{\mu}{6}$}
            \State connect $v_e$ to $u$ with a relation type of "Left"
            \ElsIf{$e_y-\frac{\mu}{2} <u_y <e_y-\frac{\mu}{6}$}
            \State connect $v_e$ to $u$ with a relation type of "Bottom-Left"
            \EndIf
            
        \EndIf

        \EndFor
             \EndFor
            \For{$s \in SE$}
            \State $B \gets \textsl{createBuffer}(s, \lambda)$
         \State $\mathcal{L}_s \gets \textsl{GetNodesInsideBuffer(B, E)} $
            \State $\mathcal{CS}_s \gets \textsl{GetSegmentsIntersectingBuffer(s, $\mathcal{S}$)} $
        \For{$u \in \mathcal{L}_s$}
        \State connect $v_s$ to $u$ with a relation type of "Inside"
        \EndFor
        \For{$s_0 \in \mathcal{CS}_s$}

                \If{$s_0$ has endpoints within $\delta$ of $s$}
                    \State connect $v_s$ to $v_{s_0}$ with a relation type of "Connected"
                
                \EndIf

        \EndFor
        
             \EndFor     
         
        \State \Return $\GG$
    \end{algorithmic}
\end{algorithm}

\newpage

\begin{algorithm}
    \small
    \caption{Knowledge Graph Encoder Training}
    \label{alg:KG-encoder-training}
    \begin{algorithmic}[1]
        \Require{Two knowledge graphs $\GG_s = (\E_s, \R, \T_s, \F_s)$ and $\GG_t = (\E_t, \R, \T_t, \F_t)$, hyperparameters $k, \lambda, \beta, \alpha$, and set of aligned entities $\mathcal{A}^+$.}
        \Ensure{Trained Knowledge Graph Encoder Model $\mathcal{M}$}
        \State Randomly initialize all learnable parameters; 
        \State Initialize $h_{s_e}^{(0)}, \psi_{s_e}^{(0)}$, $h_{t_e}^{(0)}$ and $\psi_{t_e}^{(0)}$ with feature vectors $\F(e)$;
        \State Let $\mathbf{F}_s$ (resp. $\mathbf{F}_t$) be a matrix with rows $\F(e)$ for $e \in \E_s$ (resp. $e \in \E_t$);        
         \State $\mathbf{H}^s_{\text{token}} \leftarrow \mathbf{F}_s +  \mathbf{W}_{\text{token}}^{(2)} \sigma\left(   \mathbf{W}_{\text{token}}^{(1)} \texttt{LayerNorm}\left(  \mathbf{F}_s\right)  \right)$;
        \State $\Phi^s_{\text{out}} = [\bigoplus_{e \in \E_s } \varphi^s_e]^T \leftarrow \texttt{MLP} \left(\mathbf{H}^s_{\text{token}} +  \sigma\left(   \texttt{LayerNorm}\left(  \mathbf{H}^s_{\text{token}}\right) \mathbf{W}_{\text{channel}}^{(1)} \right) \mathbf{W}_{\text{channel}}^{(2)}\right)$;
        
        \State $\mathbf{H}^t_{\text{token}} \leftarrow \mathbf{F}_t +  \mathbf{W}_{\text{token}}^{(2)} \sigma\left(   \mathbf{W}_{\text{token}}^{(1)} \texttt{LayerNorm}\left(  \mathbf{F}_t\right)  \right)$;
        \State $\Phi^t_{\text{out}} = [\bigoplus_{e \in \E_t } \varphi^t_e]^T \leftarrow \texttt{MLP} \left(\mathbf{H}^t_{\text{token}} +  \sigma\left(   \texttt{LayerNorm}\left(  \mathbf{H}^t_{\text{token}}\right) \mathbf{W}_{\text{channel}}^{(1)} \right) \mathbf{W}_{\text{channel}}^{(2)}\right)$;
        \For{$\ell = 1, \dots, L$}
            \For{$e \in \E_s$}
                \State $h^{(\ell + 1)}_{s_e} = \textsc{Gnn}\left( h_{s_e}^{(\ell)} \right)$;
                \State $\alpha^s_e = [ \bigoplus_{e' \in N^s_k(e)} \alpha^s_{e, e'}] \leftarrow \textsc{Attn}\left( \psi_{s_e}^{(\ell)}, \bigoplus_{e' \in N^s_k(e)} \psi_{s_{e'}}^{(\ell)} \right)$ 
                \State $\psi^{(\ell + 1)}_{s_e} \leftarrow
                \textsc{Gat}\left( \psi_{s_e}^{(\ell)}, \mathbf{\alpha}^s_e \right)$; 
                \State $\zeta^{s}_e \leftarrow \sigma\left(  \mathbf{W}^{1}_{\text{gate}} \psi_{s_e}^{(\ell)}  +  \mathbf{W}^{2}_{\text{gate}} \varphi_{e}^{s} + \mathbf{b}_{\text{gate}}  \right)$;
                \State $\eta^{s}_e \leftarrow \sigma\left(  \mathbf{W}^{1}_{\text{gate}} h_{s_e}^{(\ell)}  +  \mathbf{W}^{2}_{\text{gate}} \varphi_{e}^{s} + \mathbf{b}_{\text{gate}}  \right)$;
                \State $\Upsilon^{(\ell)}_{s_e} \leftarrow \textsc{Gate}\left(h^{(\ell + 1)}_{s_e}, \psi^{(\ell + 1)}_{s_e}, \varphi^s_e, \zeta^{s}_e,  \eta^{s}_e\right)$;
                \Comment{Equation~\ref{eq:gate}}
            \EndFor
            \For{$e \in \E_t$}
                \State $h^{(\ell + 1)}_{t_e} = \textsc{Gnn}\left( h_{t_e}^{(\ell)} \right)$;
                \State $\alpha^t_e = [ \bigoplus_{e' \in N^t_k(e)} \alpha^t_{e, e'}] \leftarrow \textsc{Attn}\left( \psi_{t_e}^{(\ell)}, \bigoplus_{e' \in N^t_k(e)} \psi_{t_{e'}}^{(\ell)} \right)$ ;
                \State $\psi^{(\ell + 1)}_{t_e} \leftarrow
                \textsc{Gat}\left( \psi_{t_e}^{(\ell)}, \mathbf{\alpha}^t_e \right)$; 
                \State $\zeta^{t}_e \leftarrow \sigma\left(  \mathbf{W}^{1}_{\text{gate}} \psi_{t_e}^{(\ell)}  +  \mathbf{W}^{2}_{\text{gate}} \varphi_{e}^{t} + \mathbf{b}_{\text{gate}}  \right)$;
                \State $\eta^{t}_e \leftarrow \sigma\left(  \mathbf{W}^{1}_{\text{gate}} h_{t_e}^{(\ell)}  +  \mathbf{W}^{2}_{\text{gate}} \varphi_{e}^{t} + \mathbf{b}_{\text{gate}}  \right)$;
                \State $\Upsilon^{(\ell)}_{t_e} \leftarrow \textsc{Gate}\left(h^{(\ell + 1)}_{t_e}, \psi^{(\ell + 1)}_{t_e}, \varphi^t_e, \zeta^{t}_e,  \eta^{t}_e\right)$;
                \Comment{Equation~\ref{eq:gate}}
            \EndFor
        \EndFor
         \State $\mathbf{h}^s_e \leftarrow \bigoplus_{\ell = 1}^{L} ||\Upsilon_{s_e}^{(\ell)}||_{2}$ and $\mathbf{h}^t_e \leftarrow \bigoplus_{\ell = 1}^{L} ||\Upsilon_{t_e}^{(\ell)}||_{2}$; \Comment{Equation~\ref{eq:final-enc}}
         \For{$r \in \R$}
            \State $\Theta^s_{r} \leftarrow \frac{1}{|\T^s_r|} \sum_{(e, r, e') \in \T_s} \left( \mathbf{h}^s_e - \mathbf{h}^s_{e'}\right)$; \Comment{Equation~\ref{eq:relation-encoding}}
            \State $\Theta^t_{r} \leftarrow \frac{1}{|\T^t_r|} \sum_{(e, r, e') \in \T_t} \left( \mathbf{h}^t_e - \mathbf{h}^t_{e'}\right)$;\Comment{Equation~\ref{eq:relation-encoding}}
         \EndFor
         \State $\mathcal{A}^- \leftarrow$ GenerateNegativeSamples$\left( \mathcal{A}^+ \right)$;
         \State Calculate the contrastive and semantics losses; \Comment{Equations~\ref{eq:contrast-loss}, \ref{eq:relation-loss}}
         \State Minimize $\mathcal{L}_{\text{contrast}} + \alpha \mathcal{L}_{\text{semantics}}$ using Adam optimizer; \Comment{Equation~\ref{eq:final-loss}}
         \State \Return Trained model $\mathcal{M}$;
    \end{algorithmic}

\end{algorithm}

\begin{algorithm}[t]
    \small
    \caption{Map Matching}
    \label{alg:matching}
    \begin{algorithmic}[1]
        \Require{Two geospatial databases $G_s=(E_s, SE_s, F_s)$ and $G_t = (E_t, SE_t, F_t)$, $k, \lambda, \beta, \alpha, \tau$, and set of aligned entities $\mathcal{A}^+$.}
        \Ensure{A matching between entities in $G_s$ and $G_t$}
        \State $\GG_s = (\E_s, \R, \T_s, \F_s) \leftarrow \text{ConstructKG}(G_s)$; \Comment{Using Algorithm~\ref{alg:KG-construction}}
        \State $\GG_t = (\E_t, \R, \T_t, \F_t) \leftarrow \text{ConstructKG}(G_t)$; \Comment{Using Algorithm~\ref{alg:KG-construction}}
        \State $\mathcal{M} \leftarrow \text{KG-EncoderTraining}(\GG_s, \GG_t, k, \lambda, \beta, \alpha, \mathcal{A}^+)$;\Comment{Using Algorithm~\ref{alg:KG-encoder-training}}
        \State $\mathbf{S} \leftarrow [0]_{|E_s| \times |E_t|}$;
        \State $\mathbf{S}_{\text{linear}} \leftarrow [0]_{|SE_s| \times |SE_t|}$;
        \For{$e \in E_s$}
            \For{$e' \in E_t$}
            \State $\mathbf{S}[e, e'] \leftarrow \tau \times \textsc{Sim}_{\text{KG}}(\mathcal{M}(e), \mathcal{M}(e')) +  (1 - \tau) \times \textsc{Sim}_{\text{Area}}(e, e')$ ;
            \EndFor
        \EndFor
        \For{$e \in SE_s$}
            \For{$e' \in SE_t$}
            \State $\mathbf{S}_{\text{linear}}[e, e'] \leftarrow \tau \times \textsc{Sim}_{\text{KG}}(\mathcal{M}(e), \mathcal{M}(e')) +  (1 - \tau) \times \textsc{Sim}_{\text{Area}}(e, e')$ ;
            \EndFor
        \EndFor
        \State Construct complete bipartite $\mathcal{B} = (E_s, E_t, E_s \times E_t)$ with weights $1 - \mathbf{S}$;
        \State Construct complete bipartite $\mathcal{B}_{\text{linear}} = (SE_s, SE_t, SE_s \times SE_t)$ with weights $1 - \mathbf{S}_\text{linear}$;
        \State $\mathbf{M} \leftarrow $ find the minimum weighted matching in bipartite graphs $\mathcal{B}$, $\mathcal{B}_{\text{linear}}$; \Comment{Using~\cite{matching}}
         \State \Return $\mathbf{M}$;
    \end{algorithmic}
\end{algorithm}

\section{Map Merging}
\label{sec:merging}

\begin{figure*}[h]
    \centering
\includegraphics[width=0.8\textwidth]{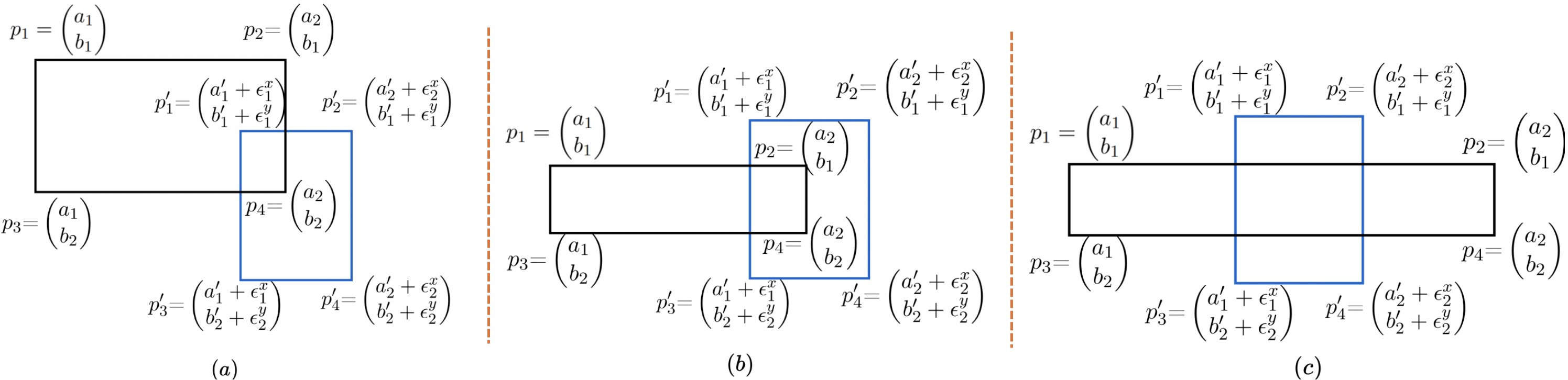}
    \caption{ Illustration of scenarios where an unmatched blue rectangle from the target database intersects with a black rectangle in the source database: (a) A blue rectangle vertex is inside the black rectangle, (b) a black rectangle vertex is inside the blue rectangle, (c) they share an intersecting area without containing each other's vertices.}
    \label{fig:merge}
\end{figure*}

\begin{lemma}
 \label{lemma:generalized}
    Every conjunctive clause of the form 
    \begin{equation}
    \label{eq:general}
    	\bigwedge \limits_{i=1}^n (x_i \theta_i a_i)
    \end{equation}
    where $\theta_i \in \{<,\leq,>,\geq\}$, can be transformed into finite linear inequalities.
\end{lemma}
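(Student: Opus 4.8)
The plan is to prove the claim by a direct syntactic case analysis on the comparison operator appearing in each conjunct, turning every atom into a standard closed linear inequality and then observing that a conjunction is exactly the simultaneous assertion (i.e.\ the intersection of the corresponding half-spaces) of its atoms.

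First I would isolate the per-atom transformation. Each conjunct $x_i \theta_i a_i$ is already an affine constraint in the single variable $x_i$ with unit coefficient, so only the operator needs attention. For $\theta_i \in \{\leq, \geq\}$ the atom is already a closed linear inequality, and $x_i \geq a_i$ can be normalized to $-x_i \leq -a_i$ so that, if desired, every constraint is written uniformly as a ``$\leq$'' inequality. For $\theta_i \in \{<, >\}$ the atom is strict; here I would replace $x_i < a_i$ by $x_i \leq a_i - \rho$ and $x_i > a_i$ by $x_i \geq a_i + \rho$, where $\rho > 0$ is a fixed margin chosen below the coordinate resolution of the data. This is the only genuinely nontrivial step, since over the reals a strict inequality is not a closed half-space.

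Next I would assemble the conjunction. An assignment satisfies $\bigwedge_{i=1}^n (x_i \theta_i a_i)$ if and only if it satisfies all $n$ transformed inequalities at once, so the solution set of the clause equals the intersection $\bigcap_{i=1}^n \{\, x : x_i \theta_i a_i \,\}$, which is precisely the feasible region of the finite system obtained by collecting the $n$ transformed inequalities. Because $n$ is finite, this system is finite, which is exactly what the lemma asserts. This directly feeds the MILP formulation of Problem~\ref{prob:map-merging-optimization}, whose solver requires closed (non-strict) linear constraints.

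The hard part will be justifying that the strict-to-nonstrict replacement is harmless rather than merely convenient. I would make this precise by appealing to the fact that the coordinates of the $\epsilon$-shifted rectangles and of the immutable source data lie on a fixed finite grid (equivalently, are rationals with a bounded denominator); choosing $\rho$ strictly smaller than this resolution guarantees that the relaxed system admits exactly the same admissible configurations as the original strict clause, so no feasible $\epsilon$-shift is gained or lost. With this observation in place, the remainder of the argument is purely mechanical.
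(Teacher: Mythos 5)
There is a genuine gap here: you have proved a different, essentially tautological statement rather than the one the paper actually needs. Read literally, ``a conjunction of linear atoms can be transformed into finite linear inequalities'' does sound like the observation that the feasible set is a finite intersection of half-spaces, and that is all your argument establishes. But the role of Lemma~\ref{lemma:generalized} in the paper is to \emph{reify} the conjunction: the clause appears as the \emph{antecedent} of implications of the form ``if $\bigwedge_i (x_i\,\theta_i\,a_i)$ then (disjunction)'' (see Lemmas~\ref{lemma:first} and~\ref{lemma:second}), and the remark immediately after the lemma relies on handling disjunctive clauses via De Morgan. For that one must produce a binary variable $t\in\{0,1\}$ together with linear big-$M$ constraints forcing $t=1$ if and only if the conjunction holds; the paper does this by introducing one indicator $u_i$ per atom (e.g., $x_i + M(1-u_i)\geq a_i$ and $x_i - Mu_i < a_i$, so that $u_i=1$ exactly when $x_i\geq a_i$) and coupling them through $0\leq\sum_{i} u_i - nt\leq 1$. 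Your construction instead \emph{imposes} every conjunct as a hard constraint on the feasible region. Substituted into the downstream implications, that would force the antecedent to hold for every candidate pair of rectangles --- asserting, for instance, that every blue vertex lies in the $x$-range of every black rectangle --- which is both false and not what the merging formulation requires; and it cannot handle disjunctions at all, since a union of half-spaces is not an intersection of half-spaces. The missing idea is precisely the indicator-variable (reification) encoding.

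Your treatment of strict inequalities is a reasonable side observation --- the paper's own constraints such as $x_i - Mu_i < a_i$ are not literally admissible in a standard MILP solver, and replacing them by a small margin $\rho>0$ tied to the coordinate resolution (or simply taking closures) is the standard fix that the paper glosses over with ``the proof for the strict case is the same.'' But this refinement is orthogonal to, and does not repair, the absent reification step, which is the actual content of the lemma.
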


\begin{proof}
    We define $n$ new binary variables $u_i$ corresponding to each condition. Next, we prove for the case that $\theta_i \in \{\leq, \geq \}$. The proof for the case that we have strict inequality is the same. For $x_i \geq a_i $ (resp. $x_i \leq a_i $ ) we add two inequalities $x_i + M(1-u_i) \geq a_i$ and $x_i-Mu_i<a_i$ (resp.  $x_i \leq a_i + M(1-u_i) $ and $x_i>a_i-Mu_i$), where $M$ is a large number. The intuition is that the binary variable $u_i$ equals one if and only if condition $i$ holds. At the end, we define a new binary variable $t$, and add an inequality $0\leq \sum_{i=1}^{n} u_i -nt \leq1$. 
In fact, $t$ equals one if and only if $u_i = 1 \:\:\forall i$, or in other words, when all conditions hold.
\end{proof}

Note that, if replace the Equation~\ref{eq:general} with a statement with disjunctive clauses, then we can use De Morgan's law and convert it to conductive clauses. Next, we discuss the special case (restricted version) of this lemma, when $n~=~2$.

\begin{lemma}
 \label{lemma:first}
    Every proposition in the form of 
    \begin{equation}
    \label{eqn:first_ineq}
        \text{if}\:\: a_1 \leq x_1 \text{ and } x_2 \leq a_2 \:\: \overset{\text{then}}{\Rightarrow}  \:\: y_1 \geq b_1 \:\: \text{or} \:\: y_2 \leq b_2,
    \end{equation}
    can be transformed into finite linear inequalities.
\end{lemma}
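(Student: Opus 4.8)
The plan is to reduce this statement to the already-established Lemma~\ref{lemma:generalized} by rewriting the implication as a single disjunctive clause of atomic linear conditions. The key observation is that an implication is logically equivalent to a disjunction: $P \Rightarrow Q$ holds exactly when $\neg P \vee Q$ holds. Here the antecedent $P$ is the conjunction $(a_1 \leq x_1) \wedge (x_2 \leq a_2)$ and the consequent $Q$ is the disjunction $(y_1 \geq b_1) \vee (y_2 \leq b_2)$, so once the whole proposition is rewritten as $\neg P \vee Q$ it becomes a pure disjunction of single-variable linear conditions, which is precisely the form the remark following Lemma~\ref{lemma:generalized} tells us how to handle.

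First I would negate the antecedent using De Morgan's law, obtaining $\neg P = (x_1 < a_1) \vee (x_2 > a_2)$. Substituting into $\neg P \vee Q$, the proposition in Equation~\ref{eqn:first_ineq} becomes the four-way disjunction
\begin{equation*}
(x_1 < a_1) \vee (x_2 > a_2) \vee (y_1 \geq b_1) \vee (y_2 \leq b_2),
\end{equation*}
each disjunct being an atomic condition of the form $z\,\theta\,c$ with $\theta \in \{<,>,\geq,\leq\}$. To enforce this disjunction I would follow the remark after Lemma~\ref{lemma:generalized}: its negation is the conjunction $(x_1 \geq a_1) \wedge (x_2 \leq a_2) \wedge (y_1 < b_1) \wedge (y_2 > b_2)$, and by Lemma~\ref{lemma:generalized} this conjunction is captured by a binary indicator $t$ together with finitely many big-$M$ linear inequalities, where $t = 1$ exactly when all four conditions hold, i.e.\ exactly when the original disjunction fails. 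Appending the single linear constraint $t = 0$ then forces at least one disjunct to hold, so the implication is realized as finitely many linear inequalities. (Equivalently, one may encode the disjunction directly with a binary $u_i$ per disjunct that forces its condition when $u_i=1$, plus $\sum_i u_i \geq 1$; both routes are finite and linear.)

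The only step requiring care is the treatment of the strict inequalities $x_1 < a_1$ and $x_2 > a_2$ produced by the negation, together with the selection of a finite constant $M$. Since the spatial coordinates and the thresholds $a_i, b_i$ range over a bounded region of the map, a sufficiently large finite $M$ always exists, and the strict-inequality encoding transfers verbatim from the proof of Lemma~\ref{lemma:generalized}; thus this is bookkeeping rather than a genuine obstacle, and the statement follows as a direct specialization ($n=2$, two-way consequent) of the general construction.
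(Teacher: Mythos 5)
Your proof is correct, but it takes a genuinely different route from the paper's. The paper proves Lemma~\ref{lemma:first} directly in the implication form: it introduces indicator binaries $u,v$ for the two antecedent conditions, an aggregate binary $t$ satisfying $0 \le u+v-2t \le 1$ so that $t=1$ exactly when the antecedent holds, and a selector binary $w$ that, conditional on $t=1$, activates either $y_1 \ge b_1$ (when $w=1$) or $y_2 \le b_2$ (when $w=0$) through big-$M$ terms; when $t=0$ both consequent constraints are vacuous. You instead flatten the implication into the single four-way disjunction $(x_1<a_1)\vee(x_2>a_2)\vee(y_1\ge b_1)\vee(y_2\le b_2)$ and enforce that. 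Your reduction is logically cleaner and more uniform --- it treats any Boolean combination of atomic linear conditions by one recipe, whereas the paper rebuilds a bespoke encoding for each of Lemmas~\ref{lemma:first} and~\ref{lemma:second} --- and your parenthetical direct encoding (one one-way big-$M$ constraint per disjunct plus $\sum_i u_i \ge 1$) is in fact leaner than the paper's, since it needs no biconditional indicators. What the paper's encoding buys is structural reuse: the antecedent machinery $u,v,t$ carries over verbatim to Lemma~\ref{lemma:second}, and the consequent constraints deactivate cleanly whenever the guard does not fire.

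One caveat on your primary route (negate the disjunction, invoke Lemma~\ref{lemma:generalized} with $n=4$, then impose $t=0$): the aggregation constraint $0 \le \sum_{i} u_i - nt \le 1$ in the paper's proof of Lemma~\ref{lemma:generalized} defines $t$ correctly only for $n=2$; for $n\ge 3$ it becomes infeasible whenever $2 \le \sum_i u_i \le n-1$, so you would need the standard replacement ($t \le u_i$ for all $i$ together with $t \ge \sum_i u_i - (n-1)$) before applying it with $n=4$. This is a repairable dependence on a flaw in the cited lemma rather than a gap in your own logic, and your fallback direct-disjunction encoding sidesteps it entirely; your remarks on strict inequalities and on the existence of a finite $M$ over the bounded map region are fine.
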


\begin{proof}
We can transform equation~\ref{eqn:first_ineq} into following linear inequations:
\begin{align*}
&\circled{1}\:\: x_1+M(1-u) \geq a_1 &\\
&\circled{2}\:\: x_1- Mu < a_1 &\\
&\circled{3}\:\: x_2 \leq a_2+M(1-v)&\\
&\circled{4}\:\: x_2 > a_2-Mv &\\
&\circled{5}\:\: 0\leq u+v-2t\leq 1 &\\
&\circled{6}\:\: y_1 \geq b_1w +M(w-1)+M(t-1) &\\
&\circled{7}\:\: y_2 \leq b_2 +Mw-M(t-1)&\\
& & \hfill
\end{align*}
where $u,v,w,t \in \{0,1\}$ and $M$ is a large number.\\
In the above inequations, $u$ is $1$ if and only if $a_1 \leq x_1$. Similarly, $v$  is $1$ if and only if $x_2 \leq a_2$.
Based on inequations $\circled{5}$, $t$ is equal to $1$ only when the condition in the implication~\ref{eqn:first_ineq} is satisfied. In fact, $t$ equals $1$ if and only if both $u$ and $v$ are equal to $1$. Furthermore, based on  $\circled{6}$ and $\circled{7}$, when $t = 1$, either $y_1 \geq b_1$ (when $w=1$) or $y_2 \leq b_2$ (when $w=0$). Conversely, when $t = 0$, those two inequalities become trivial.

\end{proof}

\begin{lemma}
    \label{lemma:second}
    Every proposition in the form of 
    \begin{equation}
    \label{eqn:second_ineq}
        \text{if}\:\: a_1 \leq x_1 \text{ and } x_2 \leq a_2 \:\: \overset{\text{then}}{\Rightarrow}  \:\: (y_1-b_1)(y_2-b_2)\geq 0,
    \end{equation}
    can be transformed into finite linear inequalities.
    
\end{lemma}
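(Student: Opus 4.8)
The plan is to reduce the single nonlinear constraint appearing in the consequent of~\ref{eqn:second_ineq} to a purely propositional combination of linear atoms, and then invoke the encoding machinery already established in Lemma~\ref{lemma:generalized} and Lemma~\ref{lemma:first}. The crucial observation is that a product of two reals is nonnegative exactly when its two factors share a sign (with either factor permitted to vanish). Hence the consequent $(y_1 - b_1)(y_2 - b_2) \ge 0$ is logically equivalent to the disjunction
\begin{equation*}
    \bigl( y_1 \ge b_1 \ \wedge\ y_2 \ge b_2 \bigr) \ \vee\ \bigl( y_1 \le b_1 \ \wedge\ y_2 \le b_2 \bigr),
\end{equation*}
which involves only linear inequalities. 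After this substitution, the whole proposition becomes an implication whose hypothesis is a conjunction of two linear inequalities and whose conclusion is a disjunction of conjunctions of linear inequalities, i.e.\ a formula built entirely from linear atoms using $\wedge$, $\vee$, and (through the implication) $\neg$.

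Next I would make the transformation explicit by mirroring the indicator-variable construction in the proof of Lemma~\ref{lemma:first}. I introduce binaries $u,v,t \in \{0,1\}$ exactly as there, so that $u = 1$ iff $a_1 \le x_1$, $v = 1$ iff $x_2 \le a_2$, and, via $0 \le u + v - 2t \le 1$, $t = 1$ iff the hypothesis holds. I then add one further binary selector $w \in \{0,1\}$ to choose the branch of the disjunction and, with a large constant $M$, impose
\begin{align*}
    & y_1 \ge b_1 - M(1-w) - M(1-t), \qquad y_2 \ge b_2 - M(1-w) - M(1-t),\\
    & y_1 \le b_1 + Mw + M(1-t), \qquad\ y_2 \le b_2 + Mw + M(1-t).
\end{align*}
When $t = 0$ all four inequalities are slack; when $t = 1$ and $w = 1$ the first two force $y_1 \ge b_1$ and $y_2 \ge b_2$ while the last two are slack; when $t = 1$ and $w = 0$ the last two force $y_1 \le b_1$ and $y_2 \le b_2$. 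Thus whenever the hypothesis holds, one of the two sign-consistent branches is enforced, which is precisely $(y_1 - b_1)(y_2 - b_2) \ge 0$.

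The only genuinely delicate point is the very first step, the replacement of the quadratic constraint by the sign disjunction; everything afterwards is routine big-$M$ bookkeeping already validated in Lemmas~\ref{lemma:generalized} and~\ref{lemma:first}. I would take care to check the boundary where a factor vanishes: the two conjunctions overlap exactly on the set $\{y_1 = b_1\} \cup \{y_2 = b_2\}$, so the non-strict inequalities neither discard a feasible point nor admit a spurious one. Alternatively, the explicit encoding can be bypassed altogether: once the consequent is rewritten as the disjunction above, the entire proposition is a conjunction/disjunction of linear atoms, and the claim follows immediately from Lemma~\ref{lemma:generalized} together with the De Morgan remark that accompanies it.
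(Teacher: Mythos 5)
Your proposal is correct and follows essentially the same route as the paper's proof: rewrite the product condition as the disjunction of the two sign-consistent conjunctions, reuse the $u,v,t$ machinery from Lemma~\ref{lemma:first} to detect the hypothesis, and add one branch-selector binary $w$ with big-$M$ constraints that activate exactly one conjunction when $t=1$ and relax everything when $t=0$. Your constraints are the paper's inequalities $\circled{6}$--$\circled{9}$ up to swapping the roles of $w=0$ and $w=1$, so there is nothing substantive to change.
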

\begin{proof}
We can transform equation~\ref{eqn:second_ineq} into following linear inequations:
\begin{align*}
&\circled{1}\:\: x_1+M(1-u) \geq a_1 &\\
&\circled{2}\:\: x_1- Mu < a_1 &\\
&\circled{3}\:\: x_2 \leq a_2+M(1-v)&\\
&\circled{4}\:\: x_2 > a_2-Mv &\\
&\circled{5}\:\: 0\leq u+v-2t\leq 1 &\\
&\circled{6}\:\: y_1-b_1\geq-Mw-M(1-t) &\\
&\circled{7}\:\: y_1-b_1\leq M(1-w)+M(1-t) &\\
&\circled{8}\:\: y_2-b_2 \geq-Mw-M(1-t) &\\
&\circled{9}\:\: y_2-b_2\leq M(1-w)+M(1-t) &\\
& & \hfill
\end{align*}
where $u,v,w,t \in \{0,1\}$ and $M$ is a large number.\\
In the above inequations, $u$ is one if and only if $a_1 \leq x_1$. Similarly, $v$  is one if and only if $x_2 \leq a_2$.
Based on $\circled{5}$, $t$ is equal to one only when the condition in the implication~\ref{eqn:second_ineq} is satisfied. In fact, $t$ equals one if and only if both $u$ and $v$ are equal to one. Furthermore, based on inequations $\circled{6}$, $\circled{7}$, $\circled{8}$ and $\circled{9}$, when $t = 1$, either both $y_1-b_1 \geq 0$ and $y_2-b_2 \geq 0$ (when $w=0$) or both $y_1-b_1 \leq 0 $ and  $y_2-b_2 \leq 0$ (when $w=1$). Conversely, when $t = 0$, the last four inequalities become trivial.
\end{proof}
\begin{lemma}
    \label{lemma:third}
    Every problem in the form of
    \begin{align}\label{eqn:third_ineq}
        & \text{minimize} \quad |x_1| + |x_2| + ... + |x_n|\\
       & \text{subject to } \quad \mathbf{A} \mathbf{x}\geq \mathbf{B}
    \end{align}
    can be transformed into a linear programming problem.
    
\end{lemma}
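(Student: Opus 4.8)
The plan is to linearize the absolute values in the objective by introducing auxiliary variables, a standard epigraph reformulation. First I would introduce $n$ new real variables $z_1, \ldots, z_n$, one for each term $|x_i|$ in the objective. For each $i$ I would add the two linear inequalities $z_i \geq x_i$ and $z_i \geq -x_i$, which together are equivalent to $z_i \geq |x_i|$. I would then form the linear program that minimizes $\sum_{i=1}^n z_i$ subject to these $2n$ new inequalities together with the original (already linear) constraint $\mathbf{A}\mathbf{x} \geq \mathbf{B}$. Since every constraint and the new objective are linear in the combined variable vector $(\mathbf{x}, \mathbf{z})$, the resulting problem is by construction a linear program.

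The key step is establishing that this linear program is equivalent to the original problem, in the sense that they share the same optimal value and that an optimal $\mathbf{x}$ for one yields an optimal $\mathbf{x}$ for the other. For the easy direction, given any feasible $\mathbf{x}$ of the original problem, setting $z_i = |x_i|$ produces a feasible point of the linear program with identical objective value, so the LP optimum is at most the original optimum. For the reverse direction, I would argue that at any optimal solution $(\mathbf{x}, \mathbf{z})$ of the LP we must have $z_i = |x_i|$ for every $i$: if some $z_i > |x_i|$, we could decrease $z_i$ down to $|x_i|$ without violating any constraint (the constraints $\mathbf{A}\mathbf{x} \geq \mathbf{B}$ do not involve $\mathbf{z}$, and $z_i = |x_i|$ still satisfies $z_i \geq \pm x_i$) while strictly lowering the objective, contradicting optimality. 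Hence the LP optimum equals $\sum_i |x_i|$ evaluated at its $\mathbf{x}$-component, so the two problems coincide.

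The only mild subtlety, rather than a genuine obstacle, is verifying that the reduction from $z_i \geq |x_i|$ to the pair $z_i \geq x_i$ and $z_i \geq -x_i$ is exact: since $|x_i| = \max\{x_i, -x_i\}$, the conjunction of the two inequalities holds precisely when $z_i \geq |x_i|$, so no feasible points are lost or spuriously introduced. An alternative formulation I could use instead is the positive/negative split $x_i = x_i^{+} - x_i^{-}$ with $x_i^{+}, x_i^{-} \geq 0$ and objective $\sum_i (x_i^{+} + x_i^{-})$, which is likewise purely linear and equivalent (at an optimum at most one of $x_i^{+}, x_i^{-}$ is nonzero, forcing $x_i^{+} + x_i^{-} = |x_i|$). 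Either route completes the transformation of the stated problem into a linear program.
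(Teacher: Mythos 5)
Your proposal is correct and follows essentially the same route as the paper's proof: introducing auxiliary variables $t_i$ (your $z_i$) with $t_i \geq x_i$ and $t_i \geq -x_i$, minimizing their sum, and observing that optimality forces $t_i = |x_i|$. Your treatment of the equivalence is somewhat more explicit than the paper's, but the argument is the same.
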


\begin{proof}
For each decision variable $x_i$, we introduce an auxiliary variable $t_i$ and define a linear programming formulation as follows:

    \begin{equation}
    \label{eqn:forth_ineq}
    \begin{aligned}
        & \text{minimize} \quad t_1 + t_2 + ... + t_n\\
       & \text{subject to } \quad \mathbf{A} \mathbf{x}\geq \mathbf{B}
       \\ & \qquad  \qquad  \quad \: t_i \geq x_i  \qquad  \qquad  \qquad \quad 1\leq i \leq n
       \\ & \qquad  \qquad  \quad \:  t_i \geq -x_i  \qquad  \qquad  \qquad \:  1\leq i \leq n
        \end{aligned} 
         \end{equation}
In fact, since we have $t_i \geq x_i$ and $t_i \geq -x_i $, we can infer that $t_i \geq |x_i|$. Additionally, in the optimal solution, $t_i$ equals $|x_i|$. Therefore, these two problems are equivalent.
\end{proof}

\subsection{Other Types of Rectangle Overlaps}
\label{sec:cases}
\textbf{Case 2.} \autoref{fig:merge}(b) shows the second case, where there is no vertex of the blue rectangle inside of the black rectangle but there is at least one vertex of the black rectangle inside of the blue rectangle. To make sure that this case would not happen we have the following statements:
\begin{align*}\nonumber
    \textbf{if} \:\: a'_1 + &\varepsilon_1^x + \varepsilon^x_c \leq a_1  \leq a'_2 + \varepsilon_2^x + \varepsilon^x_c \:\: \Rightarrow \\ 
    & \left( b_1 \geq b'_1 + \varepsilon^y_1 + \varepsilon^y_c \right) \: \text{or} \: \left( b_1 \leq b'_2 + \varepsilon^y_2  + \varepsilon^y_c \right),\\ \nonumber 
    \textbf{if} \:\: a'_1 + &\varepsilon_1^x + \varepsilon^x_c\leq a_1  \leq a'_2 + \varepsilon_2^x  \varepsilon^x_c \:\: \Rightarrow\\
    &\left( b_2 \geq b'_1 + \varepsilon^y_1 + \varepsilon^y_c \right) \: \text{or} \: \left( b_2 \leq b'_2 + \varepsilon^y_2  + \varepsilon^y_c\right),\\ \nonumber 
    \textbf{if} \:\: a'_1 + & \varepsilon_1^x + \varepsilon^x_c\leq a_2  \leq a'_2 + \varepsilon_2^x + \varepsilon^x_c \:\: \Rightarrow \\
    &\left( b_1 \geq b'_1 + \varepsilon^y_1 + \varepsilon^y_c \right) \: \text{or} \: \left( b_1 \leq b'_2 + \varepsilon^y_2  + \varepsilon^y_c\right),\\ \nonumber 
    \textbf{if} \:\: a'_1 + & \varepsilon_1^x + \varepsilon^x_c\leq a_2  \leq a'_2 + \varepsilon_2^x + \varepsilon^x_c\:\: \Rightarrow \\
    &\left( b_2 \geq b'_1 + \varepsilon^y_1 + \varepsilon^y_c \right) \: \text{or} \: \left( b_2 \leq b'_2 + \varepsilon^y_2 + \varepsilon^y_c  \right).
\end{align*}

Note that, each of these statements guarantees that a specific vertex of the blue rectangle is not inside of the black rectangle. The intuition is very similar to case 1, when we change the blue rectangle with the black rectangle. Constraints similar to other configurations are similar.

Now, based on Lemma~\ref{lemma:first}, each of these statements can be written as some linear inequalities in the form of mixed integer programming. We use $\mathcal{I}_2$ to refer to the set of all inequalities obtain from the above procedure for all candidate pair of rectangles.

\textbf{Case 3.}
\autoref{fig:merge}(c) shows the third case, where there is no vertex of the blue rectangle inside of the black rectangle and there is no vertex of the black rectangle inside of the blue rectangle but there is still an overlap. To make sure that this case would not happen we have the following statements:

\begin{align*}
    &\textbf{if} \:\: a_1 \leq a'_1 + \varepsilon^x_1 + \varepsilon^x_c \leq a_2  \:\: \Rightarrow \\ 
    &\left( b'_1 + \varepsilon^y_1 + \varepsilon^y_c - b_1 \right) 
     \left( b'_2 + \varepsilon^y_2 + \varepsilon^y_c - b_1 \right) \geq 0,\\
    &\textbf{if} \:\: a_1 \leq a'_1 + \varepsilon^x_1 + \varepsilon^x_c \leq a_2  \:\: \Rightarrow \\ 
    &\left( b'_1 + \varepsilon^y_1 + \varepsilon^y_c - b_2 \right) 
     \left( b'_2 + \varepsilon^y_2 + \varepsilon^y_c - b_2 \right) \geq 0,\\
    &\textbf{if} \:\: a_1 \leq a'_2 + \varepsilon^x_2 + \varepsilon^x_c \leq a_2  \:\: \Rightarrow \\ 
    &\left( b'_1 + \varepsilon^y_1 + \varepsilon^y_c - b_1 \right) 
     \left( b'_2 + \varepsilon^y_2 + \varepsilon^y_c - b_1 \right) \geq 0,\\
    &\textbf{if} \:\: a_1 \leq a'_2 + \varepsilon^x_2 + \varepsilon^x_c \leq a_2  \:\: \Rightarrow \\ 
    &\left( b'_1 + \varepsilon^y_1 + \varepsilon^y_c - b_2 \right) 
     \left( b'_2 + \varepsilon^y_2 + \varepsilon^y_c - b_2 \right) \geq 0,\\
    &\textbf{if} \:\: b_1 \leq b'_1 + \varepsilon^y_1 + \varepsilon^y_c \leq b_2  \:\: \Rightarrow \\ 
    &\left( a'_1 + \varepsilon^x_1 + \varepsilon^x_c  - a_1 \right) 
     \left( a'_2 + \varepsilon^x_2 + \varepsilon^x_c - a_1 \right) \geq 0,\\
    &\textbf{if} \:\: b_1 \leq b'_1 + \varepsilon^y_1 + \varepsilon^y_c \leq b_2  \:\: \Rightarrow \\ 
    &\left( a'_1 + \varepsilon^x_1 + \varepsilon^x_c - a_2 \right) 
     \left( a'_2 + \varepsilon^x_2 + \varepsilon^x_c - a_2 \right) \geq 0,\\
    &\textbf{if} \:\: b_1 \leq b'_2 + \varepsilon^y_2 + \varepsilon^y_c \leq b_2  \:\: \Rightarrow \\ 
    &\left( a'_1 + \varepsilon^x_1 + \varepsilon^x_c - a_1 \right) 
     \left( a'_2 + \varepsilon^x_2 + \varepsilon^x_c - a_1 \right) \geq 0,\\
    &\textbf{if} \:\: b_1 \leq b'_2 + \varepsilon^y_2 + \varepsilon^y_c \leq b_2  \:\: \Rightarrow \\ 
    &\left( a'_1 + \varepsilon^x_1 + \varepsilon^x_c  - a_2 \right) 
     \left( a'_2 + \varepsilon^x_2 + \varepsilon^x_c - a_2 \right) \geq 0.
\end{align*}

Note that, here, each of these statements guarantees that a specific side of the blue rectangle does not have an intersection with any of the sides of the black rectangle. For example, the first statement guarantees that two different sides of the blue rectangle cannot be at the different sides of the black rectangle. 

Now, based on Lemma~\ref{lemma:second}, each of these statements can be written as some linear inequalities in the form of mixed integer programming. We use $\mathcal{I}_3$ to refer to the set of all inequalities obtain from the above procedure for all candidate pair of rectangles.

\section{Additional Experiments}

\subsection{The Effect of the Grid and Buffer Width on Performance}
Next, we investigate the effect of grid and buffer width\footnote{The unit of the width depends on the scale of the map. Here, converting the unit to meters, $5 \times 10^{-4} \approx 100 m$.} on the performance of the \model. In section~\ref{ch:KG}, we used grids to capture the neighborhood of each building and used buffers to capture the buildings around each segment. Using inappropriate values for these parameters can damage the performance, causing poor conflated map. The reason is that using a large value as either grid or buffer width means considering irrelevant buildings as the neighbors of the object. In \autoref{fig:buffer-grid}(a), we report the average of the number of neighbors that we consider for each building when we change the width of the grid. As expected, using a small width for the buffer cannot capture the neighborhood as it only considers a small number of buildings as neighbors. On the other hand, using a large value for the grid width can result in considering a large number of buildings as neighbors, which misses the local property around each entity. We can see a similar pattern when we change the width of the buffer. Using a small value for the buffer width can cause missing all the neighbor buildings around each segment, missing the entire neighborhood. On the other hand, a large value for the buffer width can cause irrelevant buildings included in the neighborhood of each segment.

These results show how the width of the grid and buffer can affect the number of neighbors. However, its effect on the performance of our model remained unexplored. To this end, in \autoref{fig:buffer-grid1}, we report the F1 score of our model in the map matching task when we change the width of the buffer and grid. As expected, using large and small values for the width of the buffer and the grid can cause damaging the performance. Based on the scale of the data, there is an interval that the model achieves good performance. Comparing \autoref{fig:buffer-grid} and \autoref{fig:buffer-grid1}, we can see that in a specific interval, while the number of neighbors is changing, the performance of \model{} is not changing a lot. The main reason is the attention mechanism in knowledge graph encoding. The attention mechanism, based on the data, learns to emphasize relevant neighbors, which makes the model more robust to the value of the width of the grid and buffer.

Based on our experimental results, we realize that considering the value of the grid width close to the average of the building width in the map ($\approx 100m$) results in the best performance of the model. \autoref{fig:bufferreal} shows an example of a grid with width $= 100m$ surrounding a building in Bosten datasets.

\begin{figure}
    \centering
    \includegraphics[width=0.4\textwidth]{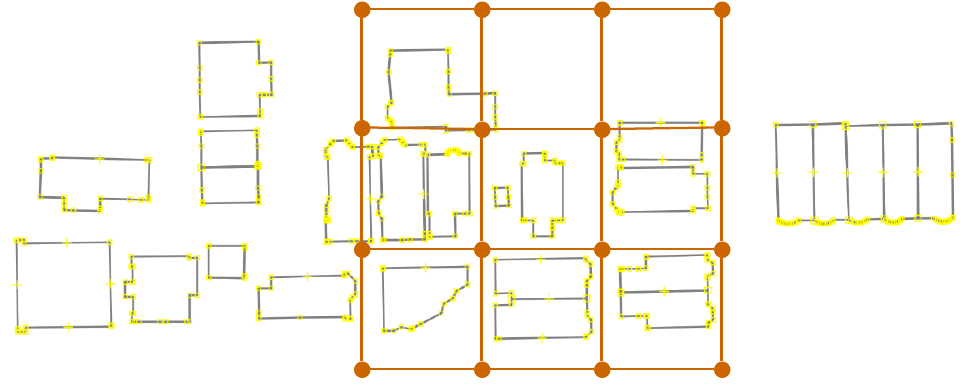}
    \caption{An example of a grid surrounding a building in Boston Open Data. In this example, all buildings located within the grid are considered as neighbors of the central building in the corresponding knowledge graph.}
    \label{fig:bufferreal}
    \end{figure}

\begin{figure}[ht]
    \centering
    \subfloat[][]{
        \includegraphics[width=0.22\textwidth]{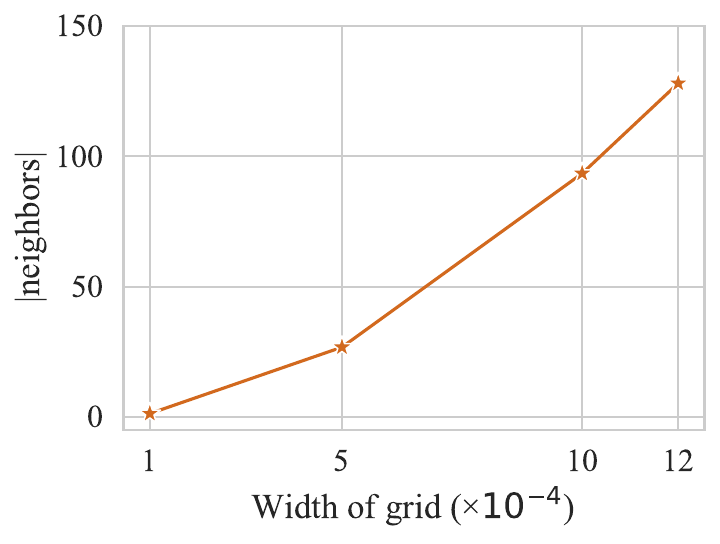}
    }
    \hfill
    \subfloat[][]{
        \includegraphics[width=0.22\textwidth]{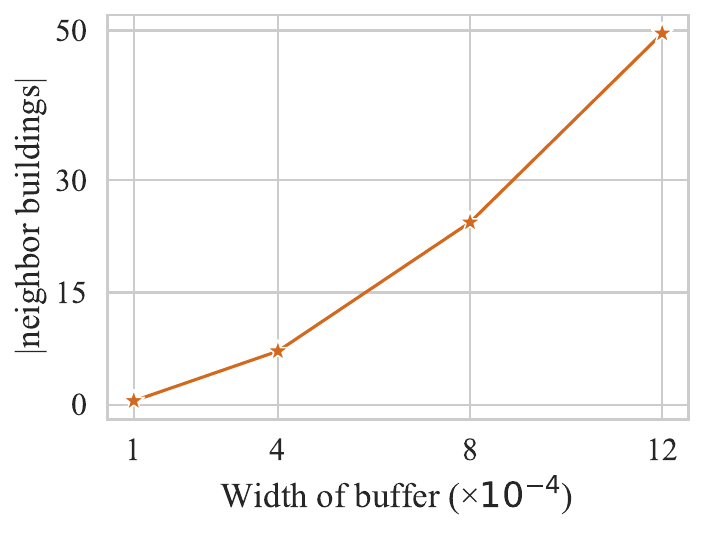}
    }
    \caption{(a) The average number of neighbors for different values of grid width. (b) The average number of neighboring buildings for different values of buffer width. As the width of buffers and grids increases, the number of neighbors for each entity also increases.}
    \label{fig:buffer-grid}
\end{figure}

\begin{figure}[ht]
    \centering
    \subfloat[][]{
        \includegraphics[width=0.22\textwidth]{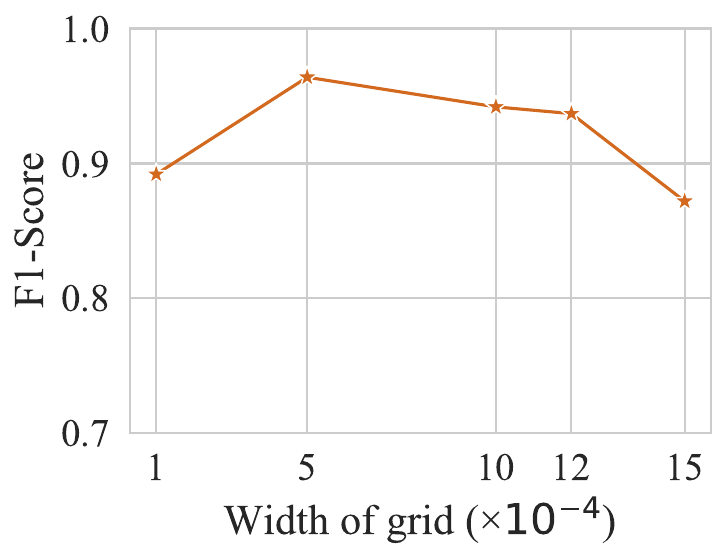}
    }
    \hfill
    \subfloat[][]{
        \includegraphics[width=0.22\textwidth]{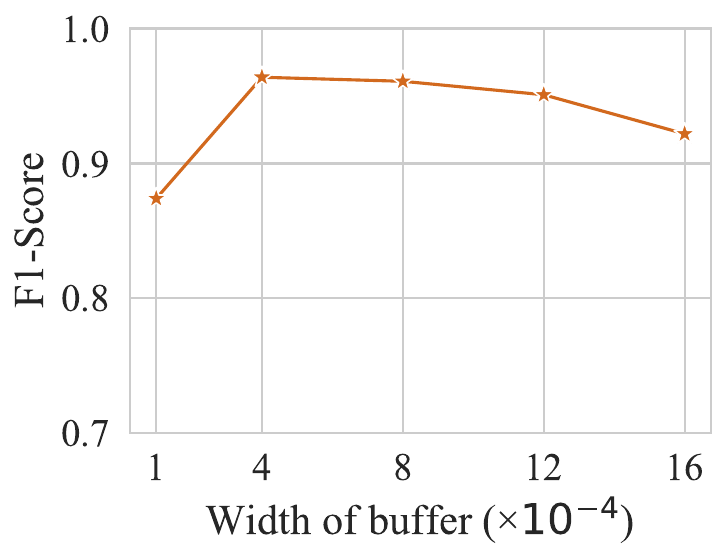}
    }
    \caption{(a) The F1-Score of matching for different values of grid width. (b) The F1-Score of matching for different values of buffer width.}
    \label{fig:buffer-grid1}
\end{figure}

\subsection{Ablation Study}
In \autoref{ch:map-matching}, we used several components to capture different aspects of the information provided in the map. Here, we evaluate the importance and contribution of each component to the performance of \model. To this end, each time, we remove one of the components, and keep other parts unchanged. \autoref{tab:ablation}
reports the results. The first row shows the performance of the \model. In the second row, we remove the relation loss and train our model only with the contrastive loss. Accordingly, our model cannot capture the semantics of relations. The modified model performance shows the importance of the relation-loss in the process of learning the map by knowledge graphs. The third row reports the performance of \model{} without encoding the 2-hop neighborhood, removing the multi-hop encoder module. \model{} outperforms this modified model by $1.67\%$ with respect to precision and $0.11\%$ with respect to recall. These results show the significant contribution of the multi-hop encoder in the performance of \model. Finally, we remove the knowledge graph encoding and only use area similarity. As discussed in previous sections, using Jaccard Similarity alone misses the structure around each entity, causing poor performance. All these results together show the importance of each of the components in the design of \model.

\begin{table}
 \caption{An ablation study showing the effectiveness of each component of \model{}. Four variants of \model{} were developed: the full model \model{}, \model{} (w\textbackslash o rel. loss) which excludes optimization of the relation loss, \model (w\textbackslash o 2-hop) that does not incorporate the 2-hop neighborhood, and a model that does not benefit from the knowledge graph encoder."}
 \resizebox{\columnwidth}{!} {
\begin{tabular}{ l | c | c| c| c|c  }
 \toprule 
{Model}
  & \multicolumn{1}{|p{2cm}|}{\centering Correct \\ Matches} &  \multicolumn{1}{|p{2cm}|}{\centering Incorrect \\ Matches} & \multicolumn{1}{|p{2cm}|}{\centering Missing \\ Matches} & {Precision} & {Recall}\\
 \midrule \midrule
 \model{} &  0.942  &  0.025  &  0.033  &  0.974 & 0.966\\
 \hline 
   \model{}
   w\textbackslash o rel. loss&  0.884  & 0.081  &  0.035  & 0.916  & 0.961 \\
   \hline
   \model{}
   w\textbackslash o 2hop     &   0.927  &  0.040  &  0.033  & 0.958  & 0.965\\
    \hline
  Area similarity &  0.803  &  0.084 &  0.113   & 0.905  & 0.876\\
 \bottomrule
\end{tabular}
}
\label{tab:ablation}
\end{table}

\subsection{Why Does Similarity Fail?}\label{subsec:fails}
As we discussed in section~\ref{ch:map-matching}, the similarity of the objects' area is an important factor in map matching. However, it is not sufficient to only look at it in the matching process and ignore the structure of the neighborhood of each object. In this part, we discuss, why similarity alone is not sufficient and show the cases that it fails. 

As discussed and shown in \autoref{tab:combined-matching}, \model{} outperforms Jaccard Similarity in sidewalk, building, and map (i.e., all entities)  matching tasks with significant improvement with respect to precision and recall. To interpret these results, we report the distribution of Jaccard similarity scores computed between
buildings (resp. sidewalks) in the OSM dataset and the BOD dataset in \autoref{fig:count} (resp. \autoref{fig:countseg}). These figures show that the Jaccard similarities of entities from two GDBs are not perfectly matched and there are $\sim30\%$ of all entities that do not have a Jaccard Similarity of more than $0.5$ with any other buildings or sidewalk.

\begin{figure}
\centering
{{\includegraphics[width=0.5\textwidth]{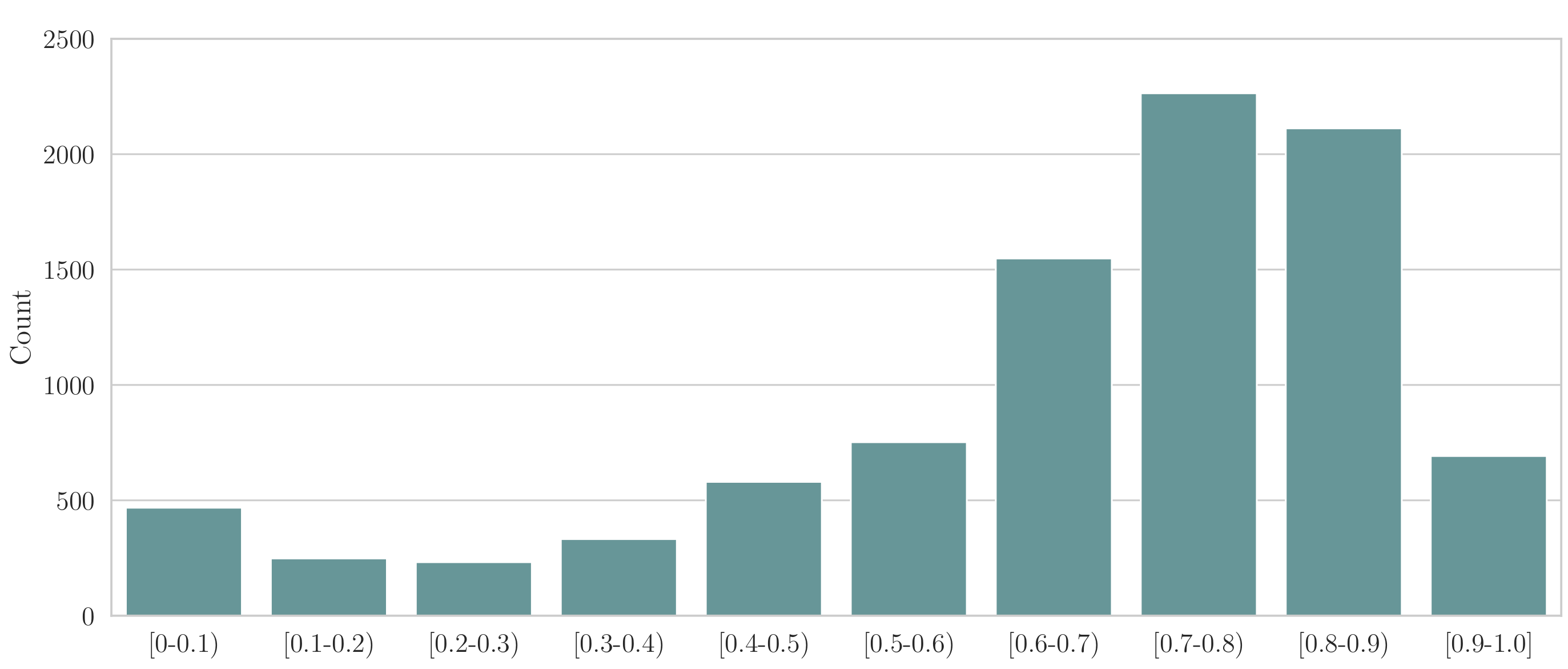} }}
    \caption{Distribution of Jaccard similarity scores computed between buildings in the OSM dataset and the BOD dataset. The Jaccard similarity scores represent the highest similarity between each building in the OSM data and buildings in the BOD dataset. This score indicates the degree of overlap between the buildings in the two datasets. }
    \label{fig:count}
\end{figure}

\begin{figure}
\centering
{{\includegraphics[width=0.5\textwidth]{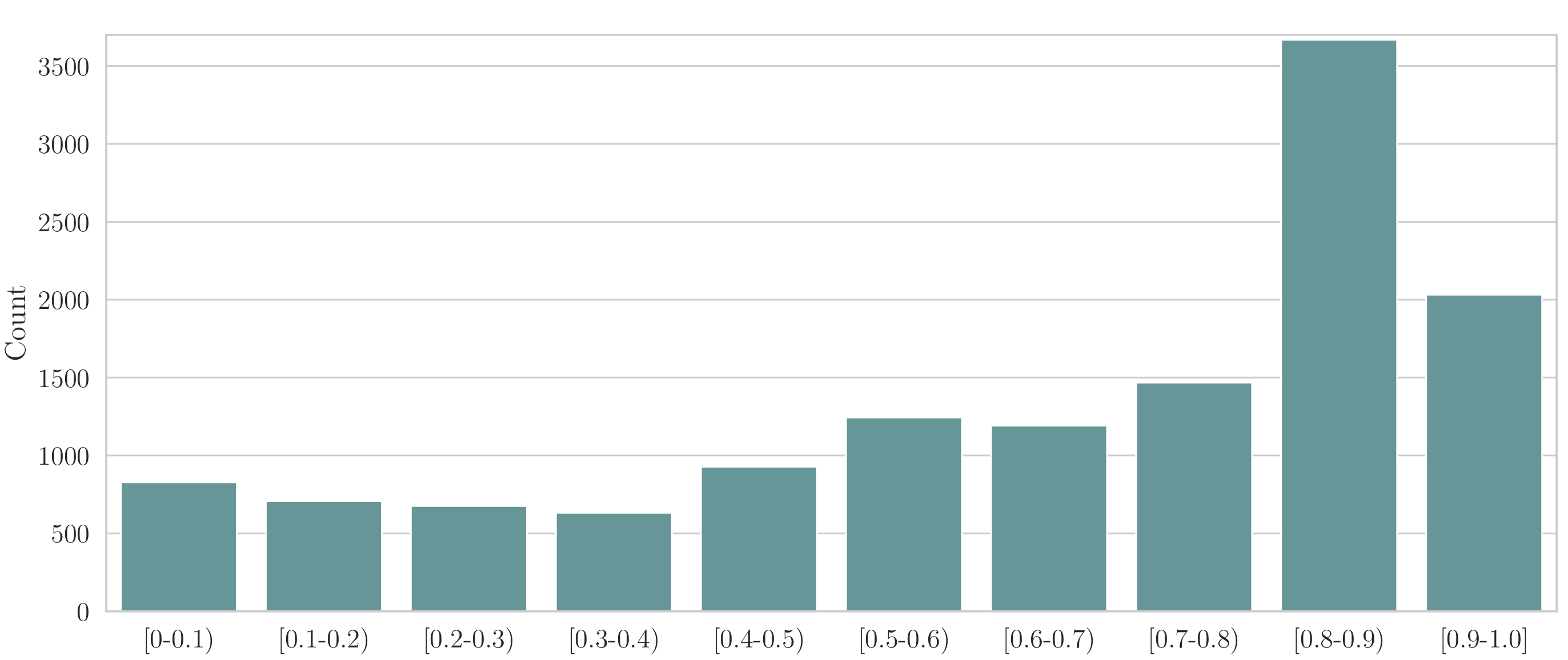} }}
    \caption{ Distribution of Jaccard similarity scores computed between buffer of segments in the OSM dataset and the BOD dataset. The Jaccard similarity scores represent the highest similarity between each segment in the OSM data and segments in the BOD dataset. }
    \label{fig:countseg}
\end{figure}

\autoref{fig:numbers} reports the distribution of the number of potential matches for each building. Most of the buildings have at least two potential matches, and there are $3\%$ of buildings that each has at least six potential matches.  These results show that due to the noise in different GDBs, the Jaccard Similarity alone is not sufficient to match entities, and we require to use structural properties around each entity to achieve a good performance, as \model{} does. 

\begin{figure}
\centering
{{\includegraphics[width=0.5\textwidth]{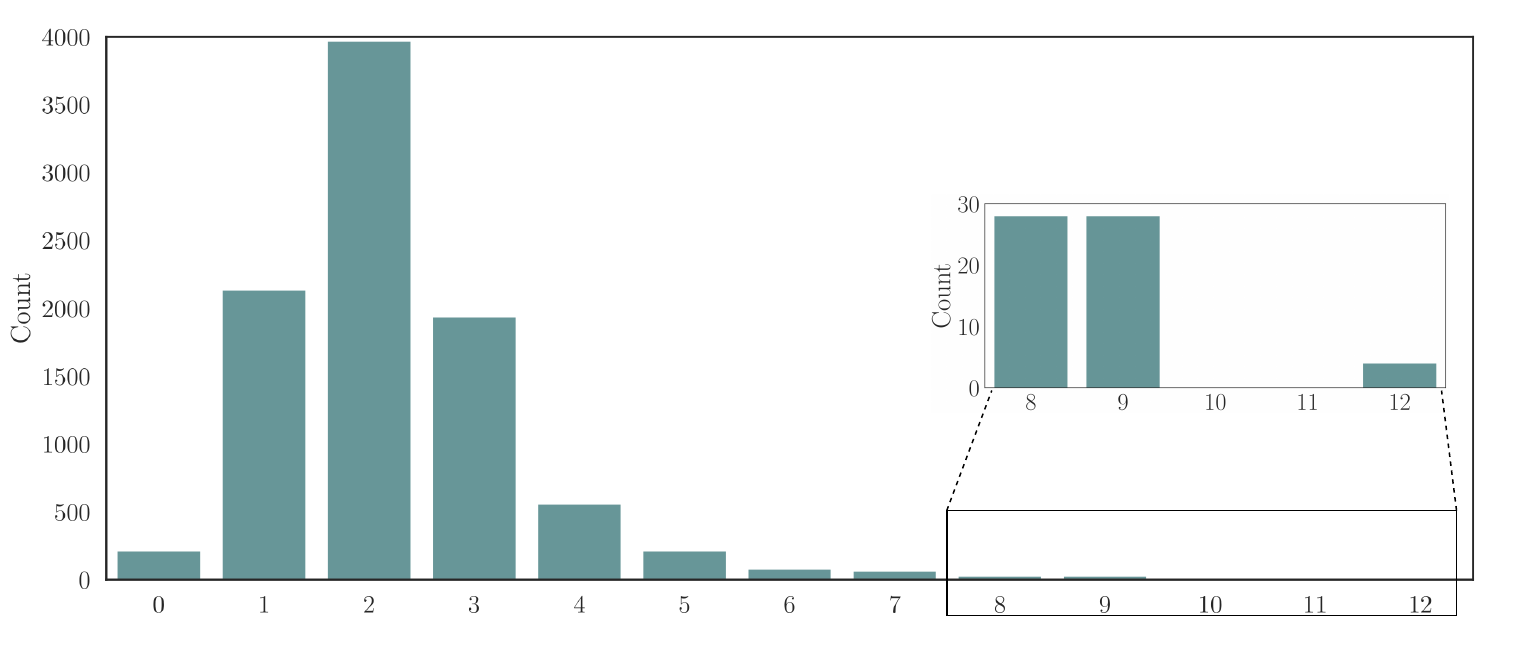} }}
    \caption{Distribution of the number of overlapping buildings in BOD for buildings in OSM.}
    \label{fig:numbers}
\end{figure}

\section{Limitations and Future Work}
In this section, we discuss some of the potential future work that might improve the designed framework. \circled{1} When constructing the knowledge graph representation of the map, a fixed grid and/or buffer size might cause false positive or false negative neighbors in the neighborhood of each entity. That is, in denser (resp. sparser) areas of the map, we might need to use smaller (resp. larger) values for the grid and buffer size. Moreover, the shape and size of the object might also affect the neighborhood. For example, larger objects require a larger grid or buffer size as they have more neighbors on the map. To this end, we need an adaptive approach that uses a dynamic grid and buffer size for each entity based on its situation and neighborhood. \circled{2} In different geospatial databases, the same object might have different representations. That is, a road can be represented using line segments in one GDB and using a polygon in the other. In the \model{} framework, we use different procedures for linear and non-linear entities. For example, we use square-shaped grids around each non-linear object, while buffers are used to capture the neighborhood of the linear objects. Also, in the knowledge graph construction, we use different relation types for linear and non-linear entities. One future direction is to adapt our approach so it can handle different types of representations of the same entity in different GDBs. \circled{3} In the map merging phase, we use minimum bounding rectangles for each object, including linear and non-linear entities. These rectangles' sides are parallel to the $\mathbf{x}$ and $\mathbf{y}$ axes, which can cause nonfunctioning space around objects that are not parallel to the$\mathbf{x}$ and $\mathbf{y}$ axises. To  this end, we can see a map at different levels of granularity and hierarchically match entiities. For example, we can divide each city into several distinct areas, and first match the source and target GDBs at the level of distinct areas. Then we can match entities within the matched areas. In this case, we can rotate the $\mathbf{x}$ and $\mathbf{y}$ axises in each distinct area such that the  nonfunctioning space for objects is minimized.


\end{document}